\theoremstyle{plain}
\newtheorem{theorem}{Theorem}[section]
\newtheorem{lemma}[theorem]{Lemma}
\theoremstyle{definition}
\newtheorem{definition}[theorem]{Definition}
\theoremstyle{remark}
  \providecommand\BibTeX{{%
    \normalfont B\kern-0.5em{\scshape i\kern-0.25em b}\kern-0.8em\TeX}}}
\begin{document}

\title{Graph Contrastive Learning with Cohesive Subgraph Awareness}


\author{Yucheng Wu}
\orcid{0000-0002-4689-9555}
\affiliation{%
  \institution{Key Lab of High Confidence Software Technologies (Peking University), Ministry of Education \& School of Computer Science, Peking University}
  \city{Beijing}
  \country{China}}
\email{wuyucheng@stu.pku.edu.cn}

\author{Leye Wang}
\orcid{0000-0002-7627-8485}
\authornote{Corresponding Authors}
\affiliation{%
  \institution{Key Lab of High Confidence Software Technologies (Peking University), Ministry of Education \& School of Computer Science, Peking University}
  \city{Beijing}
  \country{China}}
\email{leyewang@pku.edu.cn}

\author{Xiao Han}
\orcid{0000-0003-1331-0860}
\authornotemark[1]
\affiliation{%
  \institution{School of Information Management and Engineering, Shanghai University of Finance and Economics}
  \city{Shanghai}
  \country{China}}
\email{xiaohan@mail.shufe.edu.cn}

\author{Han-Jia Ye}
\orcid{0000-0003-1173-1880}
\affiliation{%
  \institution{National Key Laboratory for Novel Software Technology, Nanjing University \& School of Artificial Intelligence, Nanjing University}
  \city{Nanjing}
  \country{China}}
\email{yehj@lamda.nju.edu.cn}

\renewcommand{\shortauthors}{Yucheng Wu, Leye Wang, Xiao Han, and Han-Jia Ye}

\begin{abstract}

Graph contrastive learning (GCL) has emerged as a state-of-the-art strategy for learning representations of diverse graphs including social and biomedical networks. GCL widely uses stochastic graph topology augmentation, such as uniform node dropping, to generate augmented graphs. However, such stochastic augmentations may severely damage the intrinsic properties of a graph and deteriorate the following representation learning process. We argue that incorporating an awareness of cohesive subgraphs during the graph augmentation and learning processes has the potential to enhance GCL performance. To this end, we propose a novel unified framework called \textit{CTAug}, to seamlessly integrate cohesion awareness into various existing GCL mechanisms. In particular, \textit{CTAug} comprises two specialized modules: \textit{topology augmentation enhancement} and \textit{graph learning enhancement}. The former module generates augmented graphs that carefully preserve cohesion properties, while the latter module bolsters the graph encoder's ability to discern subgraph patterns. Theoretical analysis shows that \textit{CTAug} can strictly improve existing GCL mechanisms. Empirical experiments verify that \textit{CTAug} can achieve state-of-the-art performance for graph representation learning, especially for graphs with high degrees. The code is available at \href{https://doi.org/10.5281/zenodo.10594093}{https://doi.org/10.5281/zenodo.10594093}, or \href{https://github.com/wuyucheng2002/CTAug}{https://github.com/wuyucheng2002/CTAug}.

\end{abstract}

\begin{CCSXML}
<ccs2012>
   <concept>
       <concept_id>10010147.10010257.10010258.10010260</concept_id>
       <concept_desc>Computing methodologies~Unsupervised learning</concept_desc>
       <concept_significance>500</concept_significance>
       </concept>
   <concept>
       <concept_id>10002951.10003260.10003282.10003292</concept_id>
       <concept_desc>Information systems~Social networks</concept_desc>
       <concept_significance>300</concept_significance>
       </concept>
 </ccs2012>
\end{CCSXML}

\ccsdesc[500]{Computing methodologies~Unsupervised learning}
\ccsdesc[300]{Information systems~Social networks}

\keywords{graph contrastive learning; self-supervised learning; social networks; cohesive subgraph}



\maketitle

\section{Introduction}
Graph contrastive learning (GCL) has become a promising self-supervised learning paradigm to learn graph and node embeddings for various applications, such as social network analysis and web graph mining \cite{wu2021self,Zhu2021tu,zhu2021graph,liu2022graph}.
The idea of GCL is maximizing the representation consistency between different augmented views from the same original graph \cite{you2020graph}, in order to learn an effective graph neural network encoder.
Hence, the augmentation strategies for view generation play a vital role in GCL. 
In general, there are two augmentation types, \textit{i.e.}, \textit{topology} and \textit{feature} \cite{Zhu2021tu}. In this paper, we focus on topology augmentation, as it can be applied to either attributed or unattributed graphs.

Common topology augmentation strategies include node dropping, edge removal, subgraph sampling, \textit{etc.}~\cite{Zhu2021tu}. Existing methods mainly follow a stochastic manner to conduct topology augmentation \cite{you2020graph,zhu2020deep}.
Some methods adopt total randomized augmentation operations, like removing nodes or edges with an equivalent probability. Concerning that nodes and edges usually hold diverse levels of importance in a graph, some other methods argue that a better augmentation strategy should more likely retain the more important components of the original graph. Otherwise, randomly deleting important edges/nodes may cause the augmented views to vary far away from the original graph, thus degrading the learned graph/node embedding. 
Recently, some pioneering work starts leveraging the intrinsic properties of a graph or domain knowledge to guide the graph augmentation of GCL~\cite{zhu2021graph,Zhang2021MotifDrivenCL,wang2021multi,trivedi2022augmentations}. For example, \textit{GCA} \cite{zhu2021graph} introduces edge centrality into topology augmentation, so that important edges are likely to be kept after augmentation.
Nevertheless, there remain some important research questions.

\textbf{1. Property Enrichment}. Very limited types of properties about graphs have been used to determine important components of a graph and enhance graph augmentation for effective GCL. However, a basket of individual-level (\textit{i.e.}, node/edge) and structure-level intrinsic graph properties have been defined to distinguish the importance of elements in real-life social graphs; such properties have also been used to improve a variety of applications \cite{Holland1971TransitivityIS,Watts1998CollectiveDO}. Can we enrich the topology augmentation with more essential graph properties to improve GCL?

\textbf{2. Unified Framework}. Most existing studies focus on designing a concrete GCL mechanism for representation learning. However, as topology augmentation is a widely adopted step in various mechanisms \cite{Zhu2021tu}, can we develop a unified framework to incorporate graph properties into all of these GCL mechanisms and benefit graph representation learning?

\textbf{3. Expressive Network.} Most existing GCL methods~\cite{you2020graph,hassani2020contrastive} use standard Graph Neural Networks (GNNs) such as GCN~\cite{kipf2017semisupervised} and GIN~\cite{xu2018powerful} as GNN encoder. However, prior research has indicated that GNNs have limited expressive power and encounter difficulties in capturing subgraph properties \cite{chen2020can}. Can we engineer a more expressive graph encoder that can effectively capture subgraph information from the original graph?

This research serves as a pioneering effort to address the above research questions. Firstly, we propose to introduce \textit{cohesive} subgraphs to guide topology augmentations, which provide a novel \textit{structural-level} view of a graph's properties for graph augmentations. In general, cohesive subgraphs are densely connected subsets of important nodes in a graph. A broad of cohesive subgraphs with different specific semantic definitions, including $k$-clique  \cite{mokken1979cliques}, $k$-core \cite{seidman1983network,batagelj2003m}, and $k$-truss \cite{cohen2008trusses}, have been investigated in the graph theory literature and regarded as critical structures of graphs in a spectrum of domains such as social networks and World Wide Web \cite{kong2019k,huang2014querying,Eckmann2002CurvatureOC}.
Therefore, the basic idea of cohesion-guided augmentation is preserving \textit{cohesive} subgraphs of a graph in its augmented views. 
While the existing literature primarily depends on node-level graph properties or domain knowledge, \textit{cohesive} subgraphs could provide an effective complement to the properties studied in the literature (\textit{e.g.}, centrality \cite{zhu2021graph}). 

Moreover, we propose a unified topology augmentation framework \textit{CTAug} to ensure that the cohesion-guided augmentation idea could be flexibly adapted into a variety of graph augmentation methods.
While the predominant augmentation methods fall into either the \textit{probabilistic} or \textit{deterministic} categories, \textit{CTAug} customizes two distinct strategies to cater to these methods.
In the realm of \textit{probabilistic} augmentation-based GCL methods for graph-level representation learning \cite{you2020graph,you2021graph}, diverse augmented views are generated in a stochastic manner. \textit{CTAug} refines perturbation probability to create augmented views that tend to retain more cohesive subgraphs from the original graph.
Besides, \textit{deterministic} methods typically follow a well-defined procedure to produce a single fixed augmented view \cite{hassani2020contrastive}. In this context, \textit{CTAug} preserves the established procedure of a particular deterministic method but modifies the original graph by increasing the weights of nodes and edges within cohesive subgraphs.
With this design, the augmented graphs are supposed to better preserve cohesive subgraphs of the original graph. Besides, we also extend \textit{CTAug} for GCL methods of node-level representation learning \cite{zhu2021graph}.

Although augmented graphs maintain a higher degree of cohesive subgraphs, the risk of losing subgraph information during the graph representation learning process remains. Current studies, such as \cite{chen2020can}, have underscored that plain GNNs struggle to accurately capture subgraph properties. To address this, inspired by \cite{bouritsas2022improving}, we then propose an \textit{original-graph-oriented graph substructure network} (O-GSN) to enhance GNNs' power to aware graph cohesive substructures efficiently when encoding graphs.

In summary, this paper makes the following contributions.

1. To the best of our knowledge, this is one of the first studies to incorporate cohesion properties into GCL. Considering cohesion as a type of \textit{graph intrinsic knowledge} \cite{zhu2021graph}, this research sheds light on incorporating knowledge into self-supervised graph learning paradigms.

2. We propose \textit{CTAug}, a unified framework that can consider multiple types of cohesion properties in various GCL mechanisms during topology augmentation and graph learning processes. Theoretical analysis on the superiority of \textit{CTAug} over conventional GCL methods is provided.

3. Extensive experiments on real-life datasets substantiates that \textit{CTAug} can significantly improve existing GCL mechanisms, such as \textit{GraphCL}~\cite{you2020graph}, \textit{JOAO}~\cite{you2021graph}, \textit{MVGRL}~\cite{hassani2020contrastive}, and \textit{GCA}~\cite{zhu2021graph}, especially for graphs with high degrees.

\section{Background and Related Work}

\label{sec:subgraph}

\subsection{Cohesive Subgraph}

In literature, various cohesive subgraphs have been studied in graphs \cite{mokken1979cliques,berlowitz2015efficient}. 
In this paper, we focus on two widely-studied ones, $k$-core \cite{seidman1983network} and $k$-truss \cite{cohen2008trusses}, as they both have efficient computation algorithms in polynomial time \cite{batagelj2003m,wang2012truss}.

\textbf{$k$-core} is a maximal subgraph in which every node has at least $k$ links to the other nodes \cite{seidman1983network}.
As an extension to $k$-core, $k$-shell is a subgraph including the nodes that are in $k$-core but not in $(k+1)$-core.  
Finding $k$-core and $k$-shell is efficient as the time complexity is linear to the edge number \cite{batagelj2003m}.
Analyzing such a subgraph can provide rich information for applications in various social network applications \cite{kong2019k}, such as user influence \cite{Algaradi2017IdentificationOI,brown2011measuring,Kitsak2010IdentificationOI,Zhang2017FindingCU} and community detection \cite{giatsidis2011evaluating,Peng2014AcceleratingCD}. 
For instance, researchers find that $k$-core plays an important role in analyzing coauthor social networks \cite{giatsidis2011evaluating}. 
Specifically, it is easy to know that a paper with $(k+1)$ authors can lead to a $k$-core subgraph in a coauthor network (\textit{i.e.}, every author is linked to the other $k$ authors as they have the paper collaboration) \cite{giatsidis2011evaluating}; 
then, as different research topics usually hold diverse collaboration styles (some topics need a large research team, \textit{i.e.}, many coauthors, but some do not), $k$-core could be an effective indicator to infer the research domain of a given coauthor network.
In addition to social network analysis, $k$-core is verified as a significant property with crucial applications spanning diverse areas such as bioinformatics  \cite{altaf2003prediction,bader2003automated}, anomaly detection~\cite{malliaros2020core}, digital library text mining \cite{Rousseau2015MainCR}, airline networks \cite{Wuellner2010ResilienceAR}, \textit{etc.}

\textbf{$k$-truss} is the largest subgraph in which every edge is in at least $(k-2)$ triangles of the subgraph \cite{cohen2008trusses}. 
Triangle is the fundamental building element for networks and can indicate the stability of the social network topology, as quantified by the \textit{clustering coefficient} \cite{Watts1998CollectiveDO}. 
Triangle also reveals the transitivity in the link formation of networks \cite{Holland1971TransitivityIS}. This provides an effective indicator for link prediction in social networks  \cite{huang2015triadic}. Besides, researchers point out that the triangles in the hyperlink-based web graph reveal the topic distribution over the World Wide Web \cite{Eckmann2002CurvatureOC}. As a common way to measure triangles in subgraphs, $k$-truss has thus attracted much research interest in network analysis \cite{huang2014querying,Akbas2017TrussbasedCS}.  

\subsection{Topology Augmentation in GCL}
\label{sub:topology_aug}

Topology augmentation is widely adopted in GCL \cite{Zhu2021tu}. There are two main types of topology augmentation strategies: \textit{probabilistic} and \textit{deterministic}.

Most topology augmentation strategies in GCL are probabilistic, such as stochastic node dropping, edge perturbation, and subgraph sampling \cite{zhu2020deep,zhu2021graph,you2020graph,you2021graph}. 
More specifically, most traditional probabilistic strategies are purely randomized. For instance, the probability of the topology augmentation operations is set to \textit{uniform} over all the nodes and edges in \textit{GraphCL} \cite{you2020graph,zhu2020deep}.
More recently, some studies have tried to adaptively learn \textit{non-uniform} probabilities.
One stream of work uses intrinsic knowledge to guide topology augmentation, such as centrality \cite{zhu2021graph}, motif \cite{Zhang2021MotifDrivenCL}, and spectral knowledge~\cite{liu2022revisiting,lin2023spectral}. 
For instance, \textit{GCL-SPAN}~\cite{lin2023spectral} focuses on maximizing spectral changes during augmentation process.
Another stream of work uses a data-driven way to automatically adjust the probabilities \cite{li2022let,Suresh2021AdversarialGA,yin2022autogcl}.
Our work follows the first stream by introducing the \textit{cohesion} property into topology augmentation.

Some studies adopt a deterministic strategy in topology augmentation --- given an original graph, the augmented view is fixed. The representative strategies are \textit{diffusion}-based augmentations \cite{hassani2020contrastive,Zhu2021tu}. Conceptually, the diffusion operation would add edges to the original graph. Different from the probabilistic edge adding \cite{you2020graph}, the diffusion process is computed in a deterministic and analytic manner, \textit{e.g.}, following the Personalized PageRank \cite{hassani2020contrastive} or Markov Chain processes \cite{Zhu2021SimpleSG}.

Furthermore, certain GCL techniques, like \textit{SimGRACE} \cite{xia2022simgrace}, choose to disturb the GNN encoder rather than utilizing explicit data augmentations. Nonetheless, effectively controlling perturbations within neural networks presents a more formidable challenge compared to graph data, primarily owing to the inherent ``black box'' character of neural networks.


Unlike most prior work, our research does not aspire to proffer a concrete GCL mechanism. Instead, our objective is enhancing existing GCL mechanisms by integrating an awareness of cohesion into both probabilistic and deterministic graph topology augmentation. 
Recently, a review of existing GCL methods \cite{trivedi2022augmentations} highlights that the infusion of domain knowledge of graphs into GCL can potentially yield superior performance. Our work aligns with this direction and demonstrates the effectiveness of considering cohesion as a prior knowledge factor in the GCL process.


\section{The \textit{CTAug} Framework}

GCL aims to learn graph representations by maximizing agreement between similar graphs and minimizing agreement between dissimilar graphs. The basic loss function for a pair of graphs $\mathcal G_1$ and $\mathcal G_2$ with representations $z_1$ and $z_2$ is \cite{you2020graph}:
\begin{equation}
\label{eq:loss}
    L = -\log \frac{\exp(sim(z_1,z_2)/ \tau)}{\sum_{i,j} \exp(sim(z_i, z_j)/ \tau)}
\end{equation}
where $\tau$ is a temperature parameter, $sim$ is cosine similarity with $sim(z_i,z_j)=z_i^T z_j/\Vert z_i\Vert \Vert z_j\Vert $. 
For similar graph pairs $(\mathcal G_1, \mathcal G_2)$ augmented from the same graph (\textit{e.g.}, dropping nodes or edges with a probability $p_{dr}$), $z_1$ and $z_2$ should be close, so the numerator is large and the loss is small. For dissimilar pairs augmented from different original graphs, the denominator becomes large and the loss increases.

In general, the probabilistic topology augmentation methods may generate a variety of augmented graphs with probabilistic network manipulation operations \cite{you2020graph}. 
\textit{CTAug} intends to make probabilistic augmented graphs retain more cohesive components of the original graph.

As shown in Fig.~\ref{fig:framework}, \textit{CTAug} consists of two modules that respectively enhance the topology augmentation and graph learning steps in GCL methods.
The first module modifies the augmentation process to generate augmented graphs that preserve the cohesion properties of the original graph. 
The second module improves the GNN encoder to produce graph representations that better capture the original graph's cohesion properties.
By jointly applying these two modules, \textit{CTAug} aims to highlight the cohesion properties of graphs throughout the GCL pipeline.


\begin{figure}[t]
	\centering
	\includegraphics[width=\linewidth]{./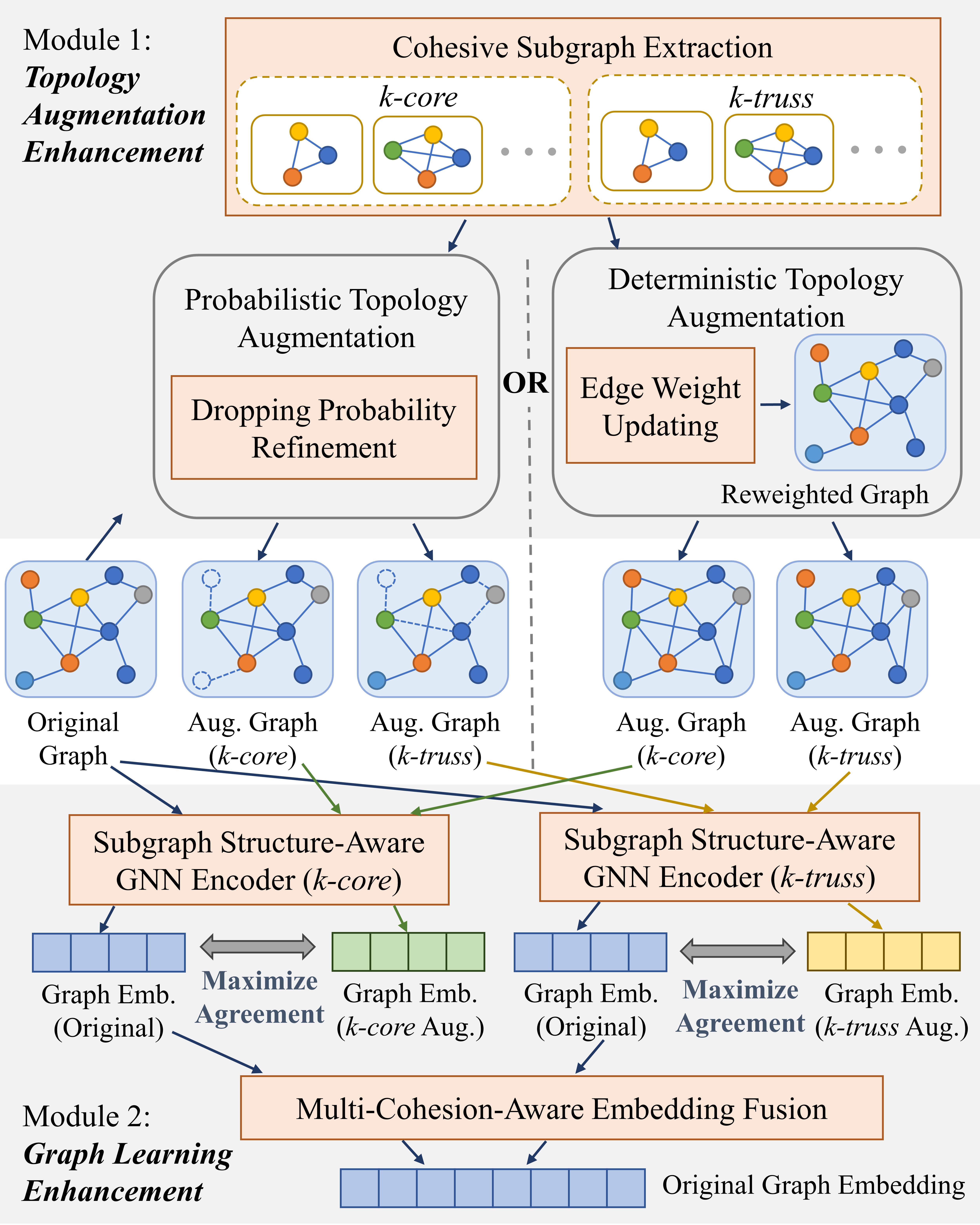}
	\caption{Overview of the \textit{CTAug} Framework. Module 1 enhances the probabilistic and deterministic augmentation process separately with the consideration of the cohesive subgraphs; Module 2 boosts GNN encoder to better capture the original graph's cohesion properties.}
	\label{fig:framework}
\end{figure}

\subsection{Topology Augmentation Enhancement}

\subsubsection{\textbf{Probabilistic Topology Augmentation}}

A straightforward method is firstly generating multiple candidate augmented graphs and selecting the one most similar to the original graph regarding a particular cohesion property.
However, generating multiple augmented graphs and computing their cohesive subgraphs is time-consuming. 
To address this, we propose to refine the probability of augmentation operations to make that nodes and edges in cohesive subgraphs likely retain in augmented graphs.
Then, we need to generate only one augmented graph, while it would tend to keep certain cohesion properties as the original graph.

Specifically, we reduce the probability of node-dropping or edge-dropping operations on the cohesive subgraphs of the original graph. 
With this idea in mind, \textit{CTAug} multiplies the original dropping probability $p_{dr}$ relevant to the nodes and edges in the cohesive subgraphs by a decay factor $\epsilon \in (0,1]$, leading to a newly-refined dropping probability,
\begin{equation}
	p_{dr}' = (1-\epsilon) \cdot  p_{dr}
\end{equation}
For instance, suppose that the original node dropping probability $p_{dr}$ is uniformly set as $0.2$ \cite{you2020graph}. 
Then, by setting $\epsilon = 0.5$, the dropping probabilities for the nodes in a cohesive subgraph will be reduced to $0.2\times 0.5=0.1$.
With the newly-refined node-dropping and edge-dropping probability, we can continue running existing GCL mechanisms without the need for making other modifications.\footnote{This enhancement can work only for the probabilistic topology augmentation of node/edge-dropping. Many existing GCL methods have verified that node/edge dropping alone is enough for generating effective graph augmentations \cite{zhu2020deep,zhu2021graph,Suresh2021AdversarialGA,yin2022autogcl,li2022let}.} 

More specifically, for a certain cohesion property, \textit{e.g.}, $k$-core, the parameter $k$ can be varied, and then various subgraphs are extracted from an original graph.
To consider cohesive subgraphs of varying $k$, first, given the original graph $\mathcal G$, we range $k$ from $k_{\min}$ to $k_{\max}$ and thus extract a set of $k$-core subgraphs,
\begin{equation}
    \mathbb S = \{{\mathcal S}^k_{core}|k=k_{\min},k_{\min}+1,...,k_{\max}\}
    \label{eq:subgraph_delta}
\end{equation}
where $k_{\max}$ is the order of the main core (the core with the largest order) of the original graph,
$k_{\min}$ can be set to  $\max\{k_{\max}-2,1\}$ for $k$-core and $\max\{k_{\max}-2,2\}$ for $k$-truss.
For a vertex $v_i$, we count how many times it appears in the set of subgraphs $\mathbb S$ to calculate its importance weight $w_{v}$. 
\begin{equation}
	w_{v}(v_i) = \sum_{\mathcal S \in \mathbb S} \bm 1_{v_i\in vertex(\mathcal S)}
\end{equation}
where $\bm 1_{v_i\in vertex(\mathcal S)}$ is an indicator function to output whether $v_i$ is in the vertex set of $\mathcal S$ (return $1$) or not (return $0$). 
Then, we normalize $w_v$ regarding the maximum vertex importance weight,
\begin{equation}
	\begin{aligned}
		& w_{v}'(v_i) =  \frac {w_v(v_i)}{\max w_v} \in [0,1] \\
	\end{aligned}
	\label{eq:norm_vertex_weight_pr}
\end{equation}
Finally, for a node $v_i$, its dropping probability is refined as follows,
\begin{equation}
	p_{dr}'(v_i) = (1-w_{v}'(v_i) \cdot \epsilon) \cdot  p_{dr}
    \label{eq:decay_linear}
\end{equation}
where $\epsilon \in (0, 1]$ specifies the maximum decay in the dropping probability for the node with the maximum importance weight.
While Eq.~\ref{eq:decay_linear} makes the dropping probability change linear to the node importance, we can set it to a general form,
\begin{equation}
	p_{dr}'(v_i) = (1-f(w_{v}'(v_i)) \cdot \epsilon) \cdot  p_{dr}
    \label{eq:decay_general}
\end{equation}
where $f$ can be any monotonic increasing function with the input and output ranges defined on [0,1].

For edge dropping augmentation, we calculate the dropping probability of an edge $e_{ij}$ by taking the average of the dropping probability of its two ends,
\begin{equation}
    p_{dr}'(e_{ij}) = (p_{dr}'(v_i)+p_{dr}'(v_j))/2
\end{equation}

\subsubsection{\textbf{Deterministic Topology Augmentation}}
\label{app:da}


Different from probabilistic augmentations, deterministic augmentation generates a \textit{single} new graph from the original graph. 
As a representative, \textit{MVGRL} \cite{hassani2020contrastive} leverages a personalized PageRank \cite{Page1999ThePC} 
diffusion process to generate a deterministic augmented view from the original graph, which can be computed in a closed form \cite{klicpera2019diffusion}. In particular, the personalized PageRank diffusion can be calculated as, 
\begin{equation}
	\bm S = \alpha(\bm I - (1-\alpha) \bm D^{1/2} \bm A \bm D^{-1/2})^{-1}
	\label{eq:ppr}
\end{equation}
where $\bm D$ is the diagonal degree matrix, $\bm A$ is the adjacency matrix, and $\alpha$ denotes the teleport probability \cite{klicpera2019diffusion}. With \textit{CTAug}, we can obtain a re-weighted adjacency matrix $\bm A'$ where $\bm A'_{i,j} = w_e'(e_{ij})$ (see Eq.~\ref{eq:reweight_edge}). 
Then, we can use $\bm A'$ to replace $\bm A$ in Eq.~\ref{eq:ppr} and conduct a cohesion-aware diffusion process.



As state-of-the-art deterministic topology augmentation strategies are mostly based on \textit{graph diffusion}, \textit{e.g.}, Personalized PageRank and Markov Chain processes \cite{hassani2020contrastive,Zhu2021tu}, we then design an enhancement strategy to make the graph diffusion process cohesion-aware. The main idea is to assign larger weights to the graph edges in cohesive subgraphs so that the graph diffusion process would favor the large-weighted edges, as shown in Fig.~\ref{fig:framework}.


We use $k$-core as an example to illustrate the process.
First, given the original graph $\mathcal G$, we range $k$ from 1 to $k_{\max}$ and thus extract a set of $k$-core subgraphs $\mathbb S = \{{\mathcal S}^k_{core}|k=1,2,...,k_{\max}\}$. Then, for a vertex $v_i$, we count how many times it appears in the set of subgraphs $\mathbb S$ to calculate its importance weight $w_{v}$. 
\begin{equation}
	w_{v}(v_i) = \sum_{\mathcal S \in \mathbb S} \bm 1_{v_i\in vertex(\mathcal S)}
\end{equation}
where $\bm 1_{v_i\in vertex(\mathcal S)}$ is an indicator function to output whether $v_i$ is in the vertex set of $\mathcal S$ (return $1$) or not (return $0$). 

Then, we normalize $w_v$ regarding the average vertex importance weight,
\begin{equation}
	\begin{aligned}
		& w_{v}'(v_i) = \eta\cdot \frac {w_v(v_i)}{\bar w_v} + (1-\eta)\cdot 1 \\
		& \bar w_v = \frac{\sum_{v_i \in vertex(\mathcal G)} w_v(v_i)}{|vertex(\mathcal G)|}
	\end{aligned}
	\label{eq:norm_vertex_weight}
\end{equation}
where $\eta\in [0,1]$ is a factor controlling the degree to consider cohesive subgraphs. 
If $\eta$ is set to a value closer to 1, the cohesion property will be considered at a higher level.

Finally, suppose the original weight of edge $e_{ij}$ is $w_e(e_{ij})$, our updated weight $w_e'(e_{ij})$ is,
\begin{equation}
	w_e'(e_{ij}) = \frac{1}{2} (w_v'(v_i) + w_v'(v_j)) w_e(e_{ij})
	\label{eq:reweight_edge}
\end{equation}
Large vertex weights will increase the corresponding edge weights, and vice versa. We use the re-weighted graph as the input for deterministic augmentation (\textit{i.e.}, graph diffusion).

\subsection{Graph Learning Enhancement}

\subsubsection{\textbf{Subgraph-aware GNN Encoder}}
\label{sub:o-gsn}

While the topology augmentation enhancement part has ensured that the augmented view would probably retain cohesive subgraphs, the graph neural network (GNN) encoder may still lose this substructure information during the graph learning process.
In general, conventional GNNs follow a message-passing neural network (MPNN) framework, as local information is aggregated and passed to neighbors \cite{gilmer2017neural,morris2019weisfeiler,xu2018powerful}. Nevertheless, MPNNs have been proven to be limited in capturing subgraph properties, \textit{e.g.}, counting substructures \cite{chen2020can}.
Hence, we need to improve the GNN encoder's capacity to learn cohesive subgraph properties.

In \textit{CTAug}, we propose an \textit{original-graph-oriented graph substructure network (O-GSN)} to enhance existing GNN encoders, which is inspired by \textit{graph substructure network (GSN)} \cite{bouritsas2022improving}.
GSN is a recently proposed topology-aware graph learning scheme to encode substructure information and is proven to be strictly more powerful than conventional GNNs.
Specifically, GSN modifies the neighborhood aggregation process  as,
\begin{equation}
    \textbf{GSN: } AGG((\mathbf h_v, \mathbf h_u, \mathbf s_v, \mathbf s_u)_{u \in \mathcal N(v)})
\end{equation}
where $AGG$ is the neighborhood aggregation function such as $\sum_{u \in \mathcal N(v)} MLP (\cdot)$, $\mathbf h_v$ is the hidden state of node $v$, and $\mathbf s_v$ is the substructure-encoded feature of node $v$.

In particular, $\mathbf s_v$ counts how many times node $v$ appears in a set of subgraph structures $\mathcal H$ (\textit{e.g.}, varying-size cliques).
We employ a concatenation operation to combine $\mathbf h_v$ with $\mathbf s_v$, resulting in the updated hidden state $\mathbf h'_v=[\mathbf h_v,\mathbf s_v]$. Similarly, we obtain the renewed hidden state $\mathbf h'_u=[\mathbf h_u,\mathbf s_u]$ for node $u$. These updated hidden states are then utilized to carry out the message passing and aggregation processes, similar to conventional MPNNs.
In brief, GSN adds an extra set of substructure-encoded node features to every GNN layer to enhance GNN's subgraph-aware ability.
However, directly applying GSN into \textit{CTAug} still faces two issues:

(i) \textit{Low Efficiency}. GSN needs to learn $\mathbf s_v$ for every node in the graph with subgraph counting algorithms \cite{cordella2004sub}.
As the augmented view is randomly generated in GCL, directly applying GSN means that subgraph counting needs to be re-computed for every augmented view in an online manner, which is highly time-consuming.

(ii) \textit{Losing Track of the Original Graph}. It is possible that two different original graphs generate the same augmented view. Directly applying GSN still cannot differentiate which original graph generates the augmented view.

To overcome the two issues, we propose the original-graph-oriented GSN, denoted as O-GSN.
Specifically, O-GSN uses the substructure-encoded features from the original graph,
\begin{equation}
    \textbf{O-GSN: } AGG((\mathbf h_v, \mathbf h_u, \mathbf s_v^o, \mathbf s_u^o)_{u \in \mathcal N(v)})
\end{equation}
where $\mathbf s_v^o$ is the substructure-encoded feature of node $v$ in the original graph.

With O-GSN, we only need to compute the substructure-encoded features for the original set of graphs in the data pre-processing stage, thus improving the training efficiency.
Moreover, by considering features from the original graph, O-GSN enhances the encoder's power to differentiate the same augmented view from different original graphs, which may further enhance the GNN encoder's expressive power.

\textit{Selection of Substructures in O-GSN}.  In order to enhance the performance of GCL by considering cohesive subgraphs, the substructures selected in O-GSN should also be representative $k$-core/truss cohesive subgraphs. To achieve this, we analyze the cohesive properties of the candidate substructures used in the original GSN implementation and select those that are representative. In our current implementation, we focus on clique substructures. A detailed analysis of why we select cliques can be found in Appendix \ref{app:substructure_selection}. 

\subsubsection{\textbf{Multi-Cohesion Embedding Fusion}}
\label{sec:fusion}

Since different cohesion properties can identify different important parts of graphs, one may want to take heterogeneous cohesion properties into account, \textit{e.g.}, both $k$-core and $k$-truss. To this end, we also design a \textit{multi-cohesion embedding fusion} component to fuse embeddings obtained considering a set of various cohesion properties $\mathbb C$. 

Specifically, we choose different cohesion properties and follow the augmentation enhancement process to train GNN encoders. 
Then, we concatenate embeddings learned from the augmentation strategy based on different cohesion properties as,
\begin{equation}
	z_i = {||}_{c\in\mathbb C} z_i^c
\end{equation}
where $z_i\in\mathbb R^{n\times (d\cdot |\mathbb C|)}$ is the final graph embedding of $\mathcal G_i$, $z_i^c\in\mathbb R^{n\times d}$ is the graph embedding generated based on a certain cohesion property $c\in\mathbb C$, such as $k$-core/truss.

\subsection{Extension for Node Embedding Learning}
In general, the GCL methods for node embedding are often \textit{local-local} GCL (comparison on node pairs) \cite{Zhu2021tu}.
Representative local-local GCL methods include \textit{GRACE} \citep{zhu2020deep} and its follow-up \textit{GCA} \citep{zhu2021graph}.
Their basic idea of augmentation is similar to \textit{GraphCL}.
First, two augmented views are generated with augmentation operations regarding certain probabilities.
Afterward, the same nodes in two views are considered as a positive pair for node embedding learning.

As local-local GCL aims to learn node embedding, topology augmentation usually uses edge dropping in order to ensure that all the nodes still remain in the augmented view.
Specifically, \textit{GRACE} uses a randomized edge-dropping operation to generate augmented views; 
\textit{GCA} improves \textit{GRACE} by introducing a centrality-based adaptive edge dropping operation.
Since this augmentation step is conceptually consistent with the edge dropping in \textit{GraphCL}, we can use a similar procedure to enhance \textit{GRACE} and \textit{GCA}.
It is worth noting that, since cohesion is a graph's substructure-level property, its importance to node embedding may not be as significant as to graph embedding.


\section{How \textit{CTAug} Powers GCL?}

\label{sub:theoretical_analysis}
In this section, we provide a theoretical analysis of the performance of \textit{CTAug} from the perspective of mutual information, based on the definitions outlined in \cite{tian2020what}.
In particular, we analyze the topology augmentation enhancement module and the graph learning enhancement module separately.
Detailed proofs for our analysis can be found in Appendix~\ref{app:proof}.\footnote{In this section, our primary emphasis lies on the contrastive schema between the original graph and the augmented graph. It is worth noting that the contrastive schema between two augmented graphs follows a similar pattern, as elaborated in Appendix~\ref{app:proof2}.}

We also conduct analytical experiments to verify the efficacy mechanism of \textit{CTAug}, with results being accessible in Appendix~\ref{app:empirical}. 

\subsection{Topology Augmentation Enhancement}
To begin, we introduce the definitions of sufficient encoder and minimal sufficient encoder, where $I$ represents mutual information.
\begin{definition}
\label{def:sufficient}
   \cite{tian2020what} \textit{(Sufficient Encoder) The encoder $f$ of $\mathcal G$ is sufficient in the contrastive learning framework if and only if $I(\mathcal G; \mathcal G') = I(f(\mathcal G); \mathcal G')$.}
\end{definition}
The encoder $f$ is sufficient if the information in $\mathcal G$ about $\mathcal G'$ is lossless during the encoding procedure, which is required by the contrastive learning objective. Symmetrically, $I(\mathcal G; \mathcal G') = I(\mathcal G; f(\mathcal G'))$ if $f$ is sufficient.

\begin{definition}
   \cite{tian2020what} \textit{(Minimal Sufficient Encoder) A sufficient encoder $f_1$ of $\mathcal G$ is minimal if and only if $I(f_1(\mathcal G);\mathcal G)\leq I(f(\mathcal G);\mathcal G)$, $\forall f$ that is sufficient.}
\end{definition}
The minimal sufficient encoder only extracts relevant information about the contrastive task and discards irrelevant information.

\begin{theorem}
\label{the:mutual1}
Suppose $f$ is a minimal sufficient encoder. If $I(\mathcal G';\mathcal G;y)$ increases, then $I(f(\mathcal G);y)$ will also increase.
\end{theorem}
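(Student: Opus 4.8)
The plan is to exploit the minimality of the encoder $f$ to pin down exactly what information $f(\mathcal G)$ retains, and then use the data processing inequality together with an information-decomposition identity to relate $I(f(\mathcal G);y)$ to $I(\mathcal G';\mathcal G;y)$. The first step is to observe that, because $f$ is a minimal sufficient encoder of $\mathcal G$ with respect to $\mathcal G'$, the representation $f(\mathcal G)$ keeps precisely the information in $\mathcal G$ that is shared with $\mathcal G'$ and nothing more; formally $I(f(\mathcal G);\mathcal G) = I(\mathcal G;\mathcal G')$, and moreover $f(\mathcal G)$ is (up to a deterministic map) a function capturing the shared part $\mathcal G \cap \mathcal G'$. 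This is the standard characterization from \cite{tian2020what} and I would invoke it directly since Definitions~\ref{def:sufficient} and the one following it are available in the excerpt.

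Next I would set up the three-variable mutual information decomposition. Write $I(f(\mathcal G);y)$ and split it according to whether the information is also contained in $\mathcal G'$:
\begin{equation}
I(f(\mathcal G);y) = I(f(\mathcal G);y;\mathcal G') + I(f(\mathcal G);y \mid \mathcal G').
\end{equation}
For the first term, since $f(\mathcal G)$ retains exactly the $\mathcal G$-information shared with $\mathcal G'$, one argues $I(f(\mathcal G);y;\mathcal G') = I(\mathcal G;y;\mathcal G')$: the part of $\mathcal G$ that is informative about both $y$ and $\mathcal G'$ survives the minimal sufficient encoding, while the part discarded by $f$ was, by minimality, not shared with $\mathcal G'$ and hence cannot contribute to a term that is also conditioned on (intersected with) $\mathcal G'$. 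For the second term, the key claim is that $I(f(\mathcal G);y\mid \mathcal G')$ does not depend on the quantity $I(\mathcal G';\mathcal G;y)$ we are perturbing — intuitively it measures the $y$-relevant signal that $f(\mathcal G)$ carries \emph{beyond} what $\mathcal G'$ explains, and the minimal sufficient encoder's behavior on the $\mathcal G'$-orthogonal component is governed by the contrastive objective, not by the overlap term. So as $I(\mathcal G';\mathcal G;y)$ increases, $I(f(\mathcal G);y)$ increases by the same increment (or at least is non-decreasing), which is exactly the claim.

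The main obstacle I anticipate is making the second step rigorous: the informal statement "$f(\mathcal G)$ retains exactly the shared information" is a statement about the \emph{value} of a mutual information, not about the random variable up to isomorphism, so one must be careful that $I(f(\mathcal G);y;\mathcal G') = I(\mathcal G;y;\mathcal G')$ genuinely holds rather than merely $\le$. I would handle this by first establishing the two one-sided inequalities — $I(f(\mathcal G);y;\mathcal G') \le I(\mathcal G;y;\mathcal G')$ follows from data processing since $f(\mathcal G)$ is a function of $\mathcal G$, and the reverse direction uses sufficiency ($I(\mathcal G;\mathcal G') = I(f(\mathcal G);\mathcal G')$) to argue no $\mathcal G'$-shared, $y$-relevant information is lost — and only then combining them. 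A secondary subtlety is that three-way mutual information can be negative, so I would either restrict attention to the regime where the relevant interaction terms are non-negative, or phrase the conclusion as "$I(f(\mathcal G);y)$ is monotonically non-decreasing in $I(\mathcal G';\mathcal G;y)$ with all other marginals held fixed," which is what the theorem statement actually asks for. The remaining bookkeeping — verifying the decomposition identity and the conditional term's invariance — is routine chain-rule manipulation that I would defer to the appendix.
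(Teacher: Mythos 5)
The decomposition you start from is essentially the paper's, but your argument has two genuine gaps. First, the key consequence of minimality is never extracted. Since $f$ is sufficient, $I(f(\mathcal G);\mathcal G)=H(f(\mathcal G))=I(f(\mathcal G);\mathcal G')+H(f(\mathcal G)\mid \mathcal G')$, and minimality forces this to its lower bound $I(f(\mathcal G);\mathcal G')$, i.e.\ $H(f(\mathcal G)\mid \mathcal G')=0$. That single fact is what kills the conditional term in the decomposition: $I(f(\mathcal G);y\mid \mathcal G')\le H(f(\mathcal G)\mid \mathcal G')=0$, so the paper obtains the exact identity $I(f(\mathcal G);y)=I(\mathcal G;\mathcal G';y)$, from which the claimed monotonicity is immediate. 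You instead leave the conditional term as an unproven ``invariance'' assertion (``governed by the contrastive objective, not by the overlap term''); this is precisely the step that requires the minimality hypothesis, and without it the conclusion does not follow --- in fact the term is not merely invariant but exactly zero.

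Second, your plan for the triple-intersection term leans on a data processing inequality, $I(f(\mathcal G);y;\mathcal G')\le I(\mathcal G;y;\mathcal G')$ ``since $f(\mathcal G)$ is a function of $\mathcal G$.'' Interaction (three-way) information does not obey a data processing inequality: writing $I(f(\mathcal G);y;\mathcal G')=I(f(\mathcal G);\mathcal G')-I(f(\mathcal G);\mathcal G'\mid y)$, both terms shrink under processing and their difference can move either way (and can be negative). The equality $I(f(\mathcal G);y;\mathcal G')=I(\mathcal G;y;\mathcal G')$ does hold here, but it must be derived, as in the paper, from sufficiency ($I(\mathcal G;\mathcal G')=I(f(\mathcal G);\mathcal G')=I(\mathcal G;f(\mathcal G'))$) through a chain of identities, not from a DPI applied to the triple term. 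You rightly flagged the negativity subtlety and the need to make ``retains exactly the shared information'' quantitative, but the two load-bearing steps --- $H(f(\mathcal G)\mid\mathcal G')=0$ and the identity-based (rather than DPI-based) evaluation of the intersection term --- are missing, so the proof as proposed does not go through.
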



Given that cohesive properties are closely tied to the graph label $y$~\cite{kong2019k,giatsidis2011evaluating,Holland1971TransitivityIS}, preserving more cohesive properties of the original graph $\mathcal{G}$ during graph augmentation (thereby increasing $I(y;\mathcal{G};\mathcal{G'})$) enables the encoder $f$ to learn improved representations $f(\mathcal{G})$ through contrastive learning. This results in more retention of information related to $y$ for downstream tasks (\textit{i.e.}, enlarging $I(f(\mathcal{G});y)$), so downstream task performance will elevate.

\subsection{Graph Learning Enhancement}






\begin{theorem}
\label{the:mutual4}
    Let $f_1$ represent our proposed O-GSN encoder with $k$-core ($k\ge2$) or $k$-truss ($k\ge 3$) subgraphs considered in subgraph structures $\mathcal H$, and let $f_2$ denote GIN (the default encoder). After sufficient training of $f_1$ and $f_2$, $I(f_1(\mathcal G);y)>I(f_2(\mathcal G);y)$.
\end{theorem}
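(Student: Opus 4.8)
The plan is to establish Theorem~\ref{the:mutual4} by chaining two facts: (i) O-GSN with clique substructures is \emph{strictly more expressive} than GIN, and (ii) strictly greater expressive power, after sufficient training, translates into strictly more label-relevant information captured in the learned representation. For step (i), I would invoke the result from \cite{bouritsas2022improving} that GSN augmented with substructure counts for a set $\mathcal{H}$ is strictly more powerful than the $1$-WL test (hence strictly more powerful than GIN) whenever $\mathcal{H}$ contains structures that $1$-WL cannot distinguish --- in particular cycles or cliques of size $\ge 3$. Since O-GSN uses exactly the original-graph substructure features $\mathbf{s}^o_v$ on top of a GIN backbone, and since a $k$-core with $k\ge 2$ necessarily contains a cycle while a $k$-truss with $k\ge 3$ necessarily contains a triangle, the clique/cycle counts we feed in are genuinely informative; I would show there exist pairs of graphs $(\mathcal{G}_a,\mathcal{G}_b)$ that GIN maps to identical representations but O-GSN separates. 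The key lemma here is: \emph{the partition of graphs induced by $f_1$ refines that induced by $f_2$, and the refinement is strict.}

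Next I would connect expressiveness to mutual information with the label. The argument is that after sufficient training the encoder realizes (up to relabeling) the finest partition it is capable of expressing; write $\pi_1$ and $\pi_2$ for the partitions of the graph space induced by $f_1$ and $f_2$. Because $\pi_1$ refines $\pi_2$, by the data-processing inequality $I(f_1(\mathcal{G});y) \ge I(f_2(\mathcal{G});y)$ always. To get the \emph{strict} inequality I would argue that the strictly finer cell --- the one O-GSN can split but GIN cannot --- carries nonzero label information: this is where the semantic premise of the theorem, namely that cohesive ($k$-core/$k$-truss) structure is correlated with the graph label $y$ \cite{kong2019k,giatsidis2011evaluating,Holland1971TransitivityIS}, does the work. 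Concretely, I would exhibit (or assume as the paper's standing hypothesis on real data) that conditioned on a GIN-indistinguishable class, the clique-count feature is not independent of $y$; then splitting that class strictly increases mutual information, i.e. $H(y\mid f_1(\mathcal{G})) < H(y\mid f_2(\mathcal{G}))$, giving $I(f_1(\mathcal{G});y) > I(f_2(\mathcal{G});y)$.

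I expect the main obstacle to be making ``after sufficient training'' precise enough to license the claim that the trained contrastive encoder actually attains its maximal expressive partition. Contrastive training optimizes an InfoNCE-type objective (Eq.~\ref{eq:loss}), not a supervised label loss, so one must argue that at the optimum the encoder separates all graphs it is architecturally able to separate (otherwise the loss could be further decreased by pushing apart distinct graphs), and then appeal to Theorem~\ref{the:mutual1} / the sufficiency framework of \cite{tian2020what} to carry this into label-relevant information. A secondary subtlety is that strict expressiveness gaps are proved for specific constructed graph pairs, whereas the MI statement concerns the data distribution; I would bridge this by assuming the dataset (or its support) actually contains such a pair with differing labels, which is consistent with the paper's empirical setting of high-degree social/biomedical graphs where cohesive structure varies across classes. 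I would keep the formal write-up modular: a lemma for the partition-refinement (expressiveness), a lemma linking refinement to conditional entropy under the correlation hypothesis, and a short paragraph invoking the sufficient-training assumption to close the gap.
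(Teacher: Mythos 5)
Your route is sound in outline but genuinely different from the paper's. You share the first step (invoking Theorem 3.1 of \cite{bouritsas2022improving}, noting that $k$-core with $k\ge 2$ and $k$-truss with $k\ge 3$ are not star graphs, and idealizing ``sufficient training'' as the encoder realizing its maximal distinguishing power), but from there you argue via partition refinement: $f_2(\mathcal G)$ is (up to relabeling) a function of $f_1(\mathcal G)$, the data-processing inequality gives $I(f_1(\mathcal G);y)\ge I(f_2(\mathcal G);y)$, and strictness is purchased with an explicit distributional hypothesis that, within a GIN-indistinguishable cell, the substructure counts are not independent of $y$. The paper never invokes that label-correlation hypothesis in this proof. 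Instead it converts the expressiveness gap into an entropy gap $H(f_1(\mathcal G))>H(f_2(\mathcal G))$, hence $I(f_1(\mathcal G);\mathcal G)>I(f_2(\mathcal G);\mathcal G)$, then uses Lemma~\ref{lemma:mutual} (minimizing the InfoNCE loss in Eq.~\ref{eq:loss} maximizes $I(f(\mathcal G);\mathcal G')$) to argue $I(f_i(\mathcal G);\mathcal G)\approx I(f_i(\mathcal G);\mathcal G')$, and finally decomposes $I(f_i(\mathcal G);\mathcal G')\approx I(f_i(\mathcal G);y)+I(\mathcal G;\mathcal G'|y)$, where the second term is encoder-independent; subtracting it transfers the strict gap in $I(f_i(\mathcal G);\mathcal G')$ directly to $I(f_i(\mathcal G);y)$. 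So the paper's key lemma is the contrastive-objective/MI decomposition with the common term $I(\mathcal G;\mathcal G'|y)$ cancelling, whereas yours is refinement plus conditional-entropy splitting. What each buys: your version makes explicit where label relevance enters (an assumption about the data, consistent with the citations the paper uses only as motivation), and it avoids the paper's approximations that at the optimum $I(f(\mathcal G);y|\mathcal G')$ and $I(f(\mathcal G);\mathcal G|\mathcal G')$ vanish while $I(f(\mathcal G);\mathcal G'|y)$ saturates; the paper's version keeps the theorem's hypotheses as stated (no added correlation assumption) at the cost of those heuristic limiting claims about the InfoNCE optimum. One caveat to flag in your write-up: since your strictness rests on an added hypothesis not present in the theorem statement, you should state it as an explicit assumption (and likewise state the assumption, shared with the paper, that the data distribution assigns positive mass to graph pairs O-GSN separates but GIN does not --- otherwise even the paper's entropy gap $H(f_1(\mathcal G))>H(f_2(\mathcal G))$ would fail).
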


Based on \textit{Theorem~\ref{the:mutual4}}, with other conditions kept constant, substituting the default GIN encoder with our proposed O-GSN encoder empowers the encoder to acquire enhanced representations through contrastive learning and preserve more information associated with $y$, which will boost the performance of downstream tasks.

\section{Experiments}


\subsection{Datasets and Settings}
\label{sub:dataset}

\textbf{Datasets}.
We choose five social graph datasets \cite{yanardag2015deep} (\textit{IMDB-B}, \textit{IMDB-M}, \textit{COLLAB}, \textit{RDT-B}, \textit{RDT-T}) and two biomedical graph datasets \cite{borgwardt2005protein} (\textit{ENZYMES}, \textit{PROTEINS}). Table~\ref{tbl:datasets_statistics} summarizes the statistics.

\begin{itemize}[leftmargin=15pt, topsep=0pt, partopsep=0pt]
	\item \textbf{IMDB-B} \& \textbf{IMDB-M} \cite{yanardag2015deep} datasets contain actors/actresses' relations if they appear in the same movie. The label of each graph is the movie genre. In \textit{IMDB-B}, the label is binary; in \textit{IMDB-M}, the label is multi-class.
    \item \textbf{COLLAB} \cite{yanardag2015deep} is a scientific collaboration dataset. The researcher's ego network has three possible labels corresponding to the fields that the researcher belongs to.
    \item \textbf{RDT-B} \cite{yanardag2015deep} dataset includes user interaction graphs in \textit{Reddit} threads, called \textit{subreddits}. 
    The task is to identify whether a subreddit graph is question/answer-based or discussion-based.    
    \item \textbf{RDT-T} \cite{rozemberczki2020karate} dataset contains discussion and non-discussion based threads from \textit{Reddit}. The task is to predict whether a thread is discussion-based or not.
    \item \textbf{ENZYMES} \cite{borgwardt2005protein}  includes proteins that are classified as enzymes or non-enzymes.
    \item \textbf{PROTEINS} \cite{borgwardt2005protein} contains protein tertiary structures from 6 EC top-level classes.
\end{itemize}

\begin{table*}[htbp]
 \small
	\caption{Dataset statistics for graph classification.}
	\vspace{-1em}
	\label{tbl:datasets_statistics}
	\begin{center}
		\begin{adjustbox}{max width=.95\linewidth}
			\begin{tabular}{ccccccccc}
				\toprule
				\bf Category &\bf Dataset &\bf \#Graph &\bf \#Class  &\bf Avg. \#Nodes  &\bf Avg. \#Edges &\bf Avg. Degree  &\bf Avg. $k_{\max}$ ($k$-core)  &\bf Avg. $k_{\max}$ ($k$-truss)\\ 
				\midrule
                \multirow{5}{*}{\makecell{Social \\Graph}}
				&IMDB-B  &1,000  &2  &19.77  &96.53  &9.76 (high) &9.16 &10.16\\
				&IMDB-M &1,500  &3  &13.00  &65.94  &10.14 (high) &8.15 &9.15\\
				&COLLAB &5,000  &3  &74.49  &2457.78  &65.97 (high) &40.53 &41.52\\
    		  &RDT-B &2,000  &2  &429.63  &497.75  &2.32 (low) &2.33 &3.09\\
		      &RDT-T &203,088  &2  &23.93  &24.99  &2.08 (low) &1.58 &2.46\\
                \midrule
                \multirow{2}{*}{\makecell{Biomedical \\Graph}}
                &ENZYMES  &600  &6  &32.63  &62.14  &3.81 (low)  &2.98 &3.80\\
				&PROTEINS  &1,113 &2 &39.06	&72.82  &3.73 (low)  &3.00 &3.83\\
				\bottomrule
			\end{tabular}
		\end{adjustbox}
	\end{center}
	\vspace{-1em}
\end{table*}


\textbf{Experiment Setup}. We take the unsupervised representation learning setting commonly used for GCL benchmarks \cite{Zhu2021tu}.
Following the evaluation scheme \cite{velivckovic2017graph,Zhu2021tu}, we train a linear SVM classifier based on graph embeddings for graph classification. We use 10-fold cross-validation and repeat each experiment five times. 
Following most GCL studies in literature \cite{you2020graph}, we use accuracy to measure the graph classification performance.

\textbf{Hardware Environment}. Experiments are run on a server with a 28-core Intel CPU, 96GB RAM, and Tesla V100S GPU. The operating system is Ubuntu 18.04.5 LTS. 


\subsection{Methods}
\label{sub:exp_methods}

For graph classification tasks, we choose 9 GCL methods for graph-level representation learning as our baselines, including \textit{GraphCL} \cite{you2020graph}, \textit{JOAO} \cite{you2021graph}, \textit{MVGRL} \cite{hassani2020contrastive}, \textit{InfoGraph} \cite{sun2020infograph}, \textit{AD-GCL} \cite{Suresh2021AdversarialGA}, \textit{AutoGCL} \cite{yin2022autogcl}, \textit{RGCL} \cite{li2022let}, \textit{SimGRACE}~\cite{xia2022simgrace}, and \textit{GCL-SPAN}~\cite{lin2023spectral}.
More details are in Appendix \ref{app:baseline}. 

To assess the effectiveness of \textit{CTAug}, we apply it to enhance three GCL methods: two with probabilistic augmentations (\textit{GraphCL} and \textit{JOAO}) and one with deterministic augmentations (\textit{MVGRL}). The resulting methods are denoted as \textbf{CTAug-GraphCL}, \textbf{CTAug-JOAO}, and \textbf{CTAug-MVGRL}, respectively. We consider two cohesion properties, namely $k$-core and $k$-truss, which we extracted from graphs using NetworkX\footnote{\href{https://networkx.org/}{https://networkx.org/}} with the algorithms in \cite{batagelj2003m,cohen2008trusses}. 
More details are in Appendix \ref{app:implementation}. 

\begin{table*}[htbp]
\small
        \caption{Accuracy (\%) on graph classification (OOM: out-of-memory). }
	\label{tbl:graph_classification}
        \vspace{-1em}
	\begin{center}
        \begin{adjustbox}{max width=\textwidth}
            \begin{tabular}{lcccccccccccc}
                \toprule
                 \multirow{2}{*}{\bf Method} & \multicolumn{4}{c}{\textit{Social Graphs (High Degree)}} & \multicolumn{3}{c}{\textit{Social Graphs (Low Degree)}} & \multicolumn{3}{c}{\textit{Biomedical Graphs}} \\
                 \cmidrule(lr){2-5} \cmidrule(lr){6-8} \cmidrule(lr){9-11}
                 &\bf IMDB-B &\bf IMDB-M  &\bf COLLAB &\bf AVG. &\bf RDT-B &\bf RDT-T &\bf AVG. &\bf ENZYMES &\bf PROTEINS  &\bf AVG.\\  
                \midrule
                \textit{InfoGraph}  &71.34±0.24  &47.93±0.71  &69.12±0.15  &62.80  &89.39±1.81  &76.23±0.00  &82.81  &26.73±3.75  &74.09±0.48  &50.41  \\
                \textit{AD-GCL}  &71.28±1.10  &47.59±0.62  &71.22±0.89  &63.36  &88.84±0.90  &76.51±0.00  &82.68  &27.33±2.28  &73.39±0.85  &50.36  \\
                \textit{AutoGCL}  &71.14±0.71  &48.61±0.55  &67.27±2.64  &62.34  &89.31±1.48  &77.13±0.00  &83.22  &29.83±2.24  &73.33±0.27  &51.58  \\
                \textit{RGCL}  &71.14±0.64  &48.28±0.60  &73.48±0.93  &64.30  &91.38±0.40  &OOM  &/  &33.33±1.61  &73.37±0.35  &53.35  \\
                \textit{SimGRACE} &71.44±0.28  &48.81±0.92  &69.07±0.24  &63.11  &86.65±1.12  &76.64±0.01  &81.65 &31.37±1.59  &73.42±0.37  &52.40
                \\
                \textit{GCL-SPAN}  &70.84±0.37  &47.95±0.47  &74.33±0.40	 &64.37 &OOM  &OOM &/ &27.63±1.13  &72.06±0.25 &49.85 
                \\
                \midrule
                \textit{GraphCL}  &71.48±0.44  &48.11±0.60  &72.36±1.76  &63.98  &91.69±0.70  &77.44±0.03  &84.57  &32.83±2.05  &74.32±0.76  &53.58  \\
                \textit{CTAug-GraphCL}  &76.60±1.02  &51.12±0.57  &81.72±0.26  &69.81  &\textbf{92.28±0.33}  &\textbf{77.48±0.01}  &\textbf{84.88}  &39.17±1.00  &74.10±0.33  &56.64  \\
                \midrule
                \textit{JOAO}  &71.40±0.38  &48.68±0.36  &73.40±0.46  &64.49  &91.66±0.59  &77.24±0.00  &84.45  &34.60±1.06  &74.32±0.46  &54.46  \\
                \textit{CTAug-JOAO}  &\textbf{76.80±0.71}  &\textbf{51.19±0.88}  &\textbf{81.90±0.53}  &\textbf{69.96}  &92.19±0.24  &77.35±0.02	&84.77  &\textbf{39.92±1.36}  &74.46±0.13  &\textbf{57.19}  \\
                \midrule
                \textit{MVGRL}  &71.88±0.73  &50.19±0.40  &80.48±0.29  &67.52  &OOM  &OOM  &/  &34.20±0.67  &74.33±0.62  &54.27  \\
                \textit{CTAug-MVGRL}  &73.04±0.65  &50.79±0.54  &81.09±0.37  &68.31  &OOM  &OOM  &/  &35.46±1.20  &\textbf{75.00±0.38}  &55.23  \\
                \bottomrule
            \end{tabular}
        \end{adjustbox}
	\end{center}
\end{table*}

\begin{figure*}[htbp]
\centering
\begin{minipage}{0.23\linewidth}
    \centering 
    \includegraphics[width=\linewidth]{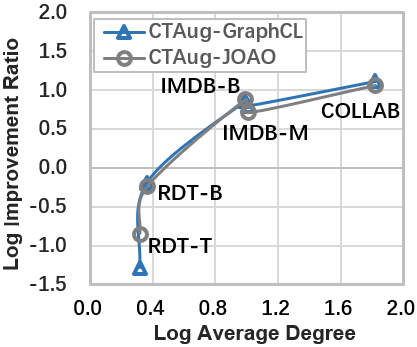}
    \vspace{-2em}
    \caption{\textit{CTAug}'s improvement on datasets with varying average degrees.}
    \label{fig:degree}
\end{minipage}
\quad
\begin{minipage}{0.24\linewidth}
    \centering 
    \includegraphics[width=\linewidth]{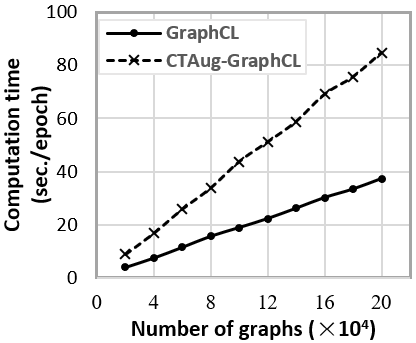}
    \vspace{-2em}
    \caption{Scalability test on \textit{RDT-T}.}
    \label{fig:time}
\end{minipage}
\quad
\begin{minipage}{0.48\linewidth}
\captionof{table}{Ablation study of \textit{CTAug-GraphCL}.}
\label{tbl:ablation}
\vspace{-1em}
\begin{center}
\small
    \begin{adjustbox}{max width=\linewidth}
    \begin{tabular}{lcccc}
        \toprule 
        \bf Method &\bf IMDB-B &\bf IMDB-M &\bf COLLAB &\bf AVG.\\ 
        \midrule
        \textit{CTAug-GraphCL}  &\textbf{76.60±1.02}  &51.12±0.57  &\textbf{81.72±0.26} &\textbf{69.81}\\
        \midrule
        \multicolumn{5}{l}{\bf Module Ablation} \\
        \textit{Only Module 1}  &71.54±0.27  &49.11±0.48  &72.64±0.63 &64.43 \\
        \textit{Only Module 2}  &73.80±1.21  &50.27±0.81  &80.03±0.42 &68.03 \\
        \midrule
        \multicolumn{5}{l}{\bf Cohesion Property Ablation} \\
        \textit{Only $k$-core}  &75.92±0.67  &\textbf{51.39±0.14} &81.36±0.16 &69.56\\
        \textit{Only $k$-truss}  &76.12±1.20  &50.99±0.57  &80.71±0.30 &69.27\\
        \bottomrule
        \end{tabular}
    \end{adjustbox}
\end{center}

\end{minipage}
\vspace{-1em}
\end{figure*}

\subsection{Main Results}
\label{sub:main_results}

\textbf{Probabilistic GCL Method Enhancement (CTAug-GraphCL \& CTAug-JOAO)}.
Table~\ref{tbl:graph_classification} presents the graph classification results of several GCL methods. Among five social graph datasets, \textit{IMDB-B}, \textit{IMDB-M}, and \textit{COLLAB} exhibit high average degrees ($\sim 10$ or larger). We expect that \textit{CTAug} will perform well on these datasets, as high-degree graphs usually have highly-cohesive subgraphs.\footnote{Table~\ref{tbl:datasets_statistics} lists the average node degrees and the maximum value of $k$ in $k$-core/truss subgraphs ($k_{\max}$) for all the datasets.} Our experimental results validate this expectation. Specifically, \textit{CTAug-GraphCL} yields an average accuracy improvement of 5.83\% compared to \textit{GraphCL} on three high-degree datasets.
For \textit{COLLAB}, the improvement is the most significant as \textit{CTAug} can improve \textit{GraphCL} by 9.36\%, as this dataset has the largest average node degree ($\sim65$).
Similar to \textit{GraphCL}, \textit{CTAug} can also enhance \textit{JOAO} by more than 5\%.

For the remaining two social graph datasets, namely, \textit{RDT-B} and \textit{RDT-T}, with low average degrees ($\sim 2$), \textit{CTAug}'s performance improvement is marginal. 
The reason might be that \textit{CTAug} primarily exploits the cohesion properties of a graph, and its effectiveness depends on the presence of highly cohesive substructures in the graph. 
\textit{CTAug}'s improvements on biomedical graphs are also not as significant as the improvements on high-degree social graphs, since the average degrees of biomedical datasets are small ($\sim 3$).

Fig.~\ref{fig:degree} illustrates the performance enhancement achieved by \textit{CTAug-GraphCL/JOAO} compared to \textit{GraphCL/JOAO} across datasets with different average degrees. Notably, as the average degree increases, the impact of \textit{CTAug} becomes more pronounced.
We conclude that, before applying \textit{CTAug}, it is prudent to ascertain whether the input graph is high-degree\footnote{In accordance with our experimental results, graphs with average degree greater than 5 might be considered as high-degree graphs.} or not.

Besides, the precision and recall outcomes on graph classification task are furnished in Appendix~\ref{app:precision_recall}, 
further substantiating the superior performance of \textit{CTAug} methodologies.

\textbf{Deterministic GCL Method Enhancement (CTAug-MVGRL)}. For deterministic GCL methods, \textit{CTAug} can also boost the performance by comparing \textit{CTAug-MVGRL} and \textit{MVGRL}. 
Meanwhile, the improvement is minor even for high-degree graphs; the possible reason is that \textit{MVGRL} has already used node degrees as features, which can be seen as a weak version of substructure-encoded features considered in Sec.~\ref{sub:o-gsn} (as high-degree nodes are often in certain highly-cohesive subgraphs). 
Note that \textit{MVGRL} cannot finish training for large social graphs such as \textit{RDT-B/T} due to out-of-memory. Hence, \textit{CTAug-GraphCL/JOAO} may still be preferred for practical graph-level representation learning.

\textbf{Computation Scalability}.
Fig.~\ref{fig:time} shows how the computation time changes with the increase of training graphs.
\textit{CTAug-GraphCL} consumes about two times compared to \textit{GraphCL} as \textit{CTAug} trains graph representations considering both $k$-core and $k$-truss subgraphs; if only one subgraph property is considered, the training time overhead would be very small.

\textit{CTAug} also needs to pre-compute cohesive subgraphs ($k$-core and $k$-truss in our implementation) for Module 1 and substructure-encoded features for Module 2 (O-GSN). 
The discovery of $k$-core and $k$-truss subgraphs for a single graph typically takes $\sim 10^{-2}$ seconds, while the computation of O-GSN features takes at most a few seconds (details are in Appendix~\ref{app:precomputation_time}). 
Moreover, this procedure can be parallelized or conducted offline, allowing for the convenient integration of \textit{CTAug} with a variety of existing methods.

\subsection{Ablation Study and Parameter Analysis}
\label{sec:ablation}

We conduct experiments to evaluate the effectiveness of each module in \textit{CTAug}, and the results are presented in Table~\ref{tbl:ablation}. 
Since high-degree graphs are appropriate for \textit{CTAug}, the ablation study is conducted on such graph datasets.
As expected, using only one module of \textit{CTAug} leads to a decrease in accuracy, which confirms the effectiveness of each module. While using only Module 2 has more improvements than using only Module 1, combining the two can enhance each other and achieve significantly higher accuracy. Previous studies have indicated that plain GNN cannot effectively learn subgraph properties \cite{chen2020can}, which may explain why using only Module 1 is not effective. Module 2 (O-GSN) assists GNN in preserving subgraph properties, thus enhancing Module 1.

We also examine the usefulness of combining multiple cohesion properties in our approach. However, we observe that fusion does not always improve accuracy.
To gain more insight, we conducted an empirical analysis on the difference between $k$-core and $k$-truss subgraphs in \textit{IMDB-B} and \textit{IMDB-M}. Our findings show that the overlap between the $k$-core and $k$-truss subgraphs is larger than 95\%, indicating that over 95\% of nodes and edges are shared between the obtained subgraphs. This may explain why the performances of \textit{CTAug} ($k$-core) and \textit{CTAug} ($k$-truss) are close.
Future work may explore a more efficient fusion component to address this issue.


Parameter analysis on decay factor $\epsilon$ is presented in Appendix~\ref{app:parameter}.
We observe that a majority of $\epsilon$ configurations can increase accuracy compared to the original \textit{GraphCL/JOAO}. The premium selection for $\epsilon$ typically falls between 0.2 and 0.4, enabling a balanced compromise between the diversity of augmented graphs (peak diversity at $\epsilon=0$) and the maintenance of cohesion properties (optimal preservation at $\epsilon=1$). 
Besides, the selection of subgraph structures $\mathcal H$ within O-GSN, $k$ of $k$-core/truss, and $f$ functions within Module 1 are in Appendix~\ref{app:substructure_selection} and~\ref{app:implementation}. 



\subsection{Node Classification Results}
\label{sub:exp_ext}
\begin{table}[htbp]
	\caption{Results on node classification. The baseline results (except \textit{GRACE} and \textit{GCA}) are copied from \cite{zhu2021graph} because we follow the same experimental setup. Meanwhile, we run \textit{GRACE} and \textit{GCA} by ourselves as we need to ensure that the exactly same configurations (neural network hidden units, training algorithm parameters, \textit{etc.}) are used for \textit{GRACE}/\textit{GCA} and our enhanced \textit{CTAug-GRACE}/\textit{CTAug-GCA} for a fair comparison (OOM: out-of-memory).}
	\label{tbl:node_classification}
        \small
    \vspace{-1em}
	\begin{center}
		\begin{adjustbox}{max width=1\linewidth}
			\begin{tabular}{lcccc}
				\toprule
                    \bf Method &\bf \makecell{Coauthor\\CS} &\bf \makecell{Coauthor\\Physics}  &\bf \makecell{Amazon\\Computers} &\bf AVG.\\ 
                    \midrule
                    \textit{DeepWalk+features}  &87.70±0.04  &94.90±0.09  &86.28±0.07  &89.63  \\
                    \textit{GAE}  &90.01±0.71  &94.92±0.07  &85.27±0.19  &90.07  \\
                    \textit{VGAE}  &92.11±0.09  &94.52±0.00  &86.37±0.21  &91.00  \\
                    \textit{DGI}  &92.15±0.63  &94.51±0.52  &83.95±0.47  &90.20  \\
                    \textit{GMI}  &OOM  &OOM  &82.21±0.31  &/  \\
                    \textit{MVGRL}  &92.11±0.12  &95.33±0.03  &87.52±0.11  &91.65  \\
                    \midrule
                    \textit{GRACE}  &92.83±0.10  &95.56±0.05  &86.96±0.14  &91.78  \\
                    \textit{CTAug-GRACE}  &92.96±0.05  &\textbf{95.68±0.01}  &87.59±0.12  &92.08  \\
                    \midrule
                    \textit{GCA}  &92.89±0.02  &95.55±0.03  &87.48±0.11  &91.97  \\
                    \textit{CTAug-GCA}  &\textbf{92.98±0.04}  &95.61±0.01  &\textbf{88.30±0.13}  &\textbf{92.30}  \\
				\bottomrule
			\end{tabular}
		\end{adjustbox}
	\end{center}
    \vspace{-1em}
\end{table}

We evaluate \textit{CTAug} on two representative GCL methods for node embedding, namely \textit{GRACE} \cite{zhu2020deep} and \textit{GCA} \cite{zhu2021graph}, referred to as \textit{CTAug-GRACE} and \textit{CTAug-GCA}, respectively. 
The node classification results of these methods on the \textit{Coauthor-CS}, \textit{Coauthor-Physics}, and \textit{Amazon-Computers} datasets \cite{shchur2018pitfalls} are reported in Table~\ref{tbl:node_classification}.
The baseline and dataset details are in Appendix~\ref{app:baseline} and~\ref{app:node_classification}. 

Our observations indicate that \textit{CTAug-GRACE/GCA} yield some improvement over the original \textit{GRACE/GCA}. However, the magnitude of this improvement is not as significant as the improvement of \textit{CTAug} on graph classification tasks. This discrepancy may be attributed to the fact that cohesion is a subgraph property and therefore, more relevant to the entire graph than a single node.

Furthermore, as observed in graph classification, the improvement of \textit{CTAug} is the most pronounced on \textit{Amazon-Computers}, which has the highest degree (average degree is $\sim 35$ for \textit{Amazon-Computers} and $\sim 10$ for the other two datasets). This reaffirms that \textit{CTAug} is more effective for high-degree graphs, as these graphs generally contain more highly-cohesive substructures.

\section{Conclusion and Discussion}

To introduce the awareness of cohesion properties (\textit{e.g.}, $k$-core and $k$-truss) into GCL, this work proposes a unified framework, called \textit{CTAug}, that can be integrated with various existing GCL mechanisms.
Two modules, including \textit{topology augmentation enhancement} and \textit{graph learning enhancement}, are designed to incorporate cohesion properties into the topology augmentation and graph learning processes of GCL, respectively. Extensive experiments have verified the effectiveness and flexibility of the \textit{CTAug} framework.

Moreover, given that our framework provides a general approach for generating augmented graphs steered by prior knowledge, we can seamlessly substitute $k$-core/truss cohesion properties with other domain-specific and pertinent substructures.
For illustration, we can employ key functional groups for molecule graphs~\cite{andrews1984functional} and meta-paths for heterogeneous graphs~\cite{wang2019heterogeneous}. Looking forward, 
we aspire to delve deeper into the integration of additional graph intrinsic knowledge, as well as assimilating our idea into other self-supervised graph learning paradigms, such as generative and predictive learning methods~\cite{wu2021self}.



\section*{Acknowledgements}
This research was supported by NSFC Grants no. 72071125, 72031001, and 62376118.

\bibliographystyle{ACM-Reference-Format}
\bibliography{ctaug}

\appendix

\section{Proofs for Theoretical Analysis (Sec.~\ref{sub:theoretical_analysis})}

\subsection{Contrastive schema between the original graph $\mathcal G$ and the augmented graph $\mathcal G'$}
\label{app:proof}

\noindent\textbf{Theorem~\ref{the:mutual1}.}\; \textit{Suppose $f$ is a minimal sufficient encoder. If $I(\mathcal G';\mathcal G;y)$ increases, then $I(f(\mathcal G);y)$ will also increase.}

\begin{proof}
    We denote $z=f(\mathcal G)$, $z'=f(\mathcal G')$. $f$ is sufficient, so $I(\mathcal G; \mathcal G') = I(\mathcal G; z') = I(z; \mathcal G')$. 
    \begin{equation}
    \begin{aligned}
        I(z;\mathcal G)&=H(z) \quad &(\text{$z$ is a function of $\mathcal G$}) \\
        &=I(z;\mathcal G')+H(z|\mathcal G') \\
        &\geq I(z;\mathcal G') \quad &(H(z|\mathcal G')\geq 0) \\
    \end{aligned}
    \end{equation}
    
    Because $f$ is a minimal sufficient encoder, $I(z;\mathcal G)$ will be minimized to $I(z;\mathcal G')$ and $H(z|\mathcal G')=0$ holds.
    
    \begin{equation}
    \label{eq:mutual1}
        I(z;y) = I(z;z';y)+I(z;y|z')
    \end{equation}
    
    \begin{equation}
    \begin{aligned}
    \label{eq:mutual2}
        I(z;z';y)&=I(z;z';y;\mathcal G)+I(z;z';y|\mathcal G) \\
        &=I(z;y;(z';\mathcal G))+0  &(\text{$z$ is a function of $\mathcal G$}) \\
        &=I(z;y;(\mathcal G;\mathcal G')) &(I(\mathcal G; z')=I(\mathcal G; \mathcal G')) \\
        &=I(y;\mathcal G;(z;\mathcal G')) \\
        &=I(y;\mathcal G;(\mathcal G;\mathcal G'))  &(I(\mathcal G'; z)=I(\mathcal G; \mathcal G')) \\
        &=I(y;\mathcal G;\mathcal G') \\
    \end{aligned}
    \end{equation}
    
    \begin{equation}
    \begin{aligned}
    \label{eq:mutual3}
        I(z;y|z')&=I(z;y;\mathcal G'|z')+I(z;y|\mathcal G',z')\\
        &=I(y;(z;\mathcal G')|z')+I(z;y|\mathcal G')  &(\text{$z'$ is a function of $\mathcal G'$}) \\
        &=I(y;\mathcal G; z'|z')+I(z;y|\mathcal G')  &(I(z; \mathcal G') = I(\mathcal G; z')) \\
        &=0+I(z;y|\mathcal G') \\
        &=0 \quad &(H(z|\mathcal G')=0) \\
    \end{aligned}
    \end{equation}
    
    Based on Eq.~\ref{eq:mutual1},~\ref{eq:mutual2} and~\ref{eq:mutual3}, $I(z;y)=I(y;\mathcal G;\mathcal G')$. As a result, the increase of $I(\mathcal G';\mathcal G;y)$ leads to the growth of $I(f(\mathcal G);y)$.

    From another perspective, we can extend InfoMin principle~\cite{tian2020what} to the graph field: the best-performing augmented graph should contain as much task-relevant information while discarding as much irrelevant information as possible. Formally, given the original graph $\mathcal G$ and its downstream task label $y$, the optimal augmented graph $\mathcal G'$ satisfies $I(\mathcal G;\mathcal G')=I(\mathcal G;y)$, which is called \textit{sweet spot}. 
    
    If $I(y;\mathcal G;\mathcal G')$ increases, $I(\mathcal G;\mathcal G')$ will be close to $I(\mathcal G;y)$ (because their intersection is increasing), approaching sweet spot. So higher $I(y;\mathcal G;\mathcal G')$ indicates better-augmented graph $\mathcal G'$, \textit{i.e.}, $I(f(\mathcal G);y)$ will increase. We come to the same conclusion.
    
\end{proof}

\begin{lemma}
\label{lemma:mutual}
Given that $f$ is a GNN encoder with learnable parameters. Optimizing the loss function in Eq.~\ref{eq:loss} is equivalent to maximizing $I(f(\mathcal G);f(\mathcal G'))$, leading to the maximization of $I(f(\mathcal G); \mathcal G')$.
\end{lemma}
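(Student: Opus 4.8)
The plan is to recognize that Eq.~\ref{eq:loss} is precisely the InfoNCE (NT-Xent) objective, and then to combine two classical ingredients: (i) the Contrastive Predictive Coding bound, which states that the expected InfoNCE loss is, up to an additive constant, a variational \emph{lower bound} on the mutual information between the two encoded views; and (ii) the data processing inequality, which relates $I(f(\mathcal G);f(\mathcal G'))$ to $I(f(\mathcal G);\mathcal G')$. Writing $z=f(\mathcal G)$ and $z'=f(\mathcal G')$ as elsewhere in the excerpt, I would phrase the claim as: minimizing $L$ maximizes a variational lower bound of $I(z;z')$, and that same lower bound is dominated by $I(z;\mathcal G')$, so the optimization pressure propagates to the latter quantity.

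\textbf{Step 1: InfoNCE as a lower bound on $I(z;z')$.} First I would take the expectation of Eq.~\ref{eq:loss} over a batch containing one positive pair $(z,z')$ drawn from the joint distribution and $N-1$ independent negatives drawn from the marginal, noting that the critic $h(z_i,z_j)=\exp(sim(z_i,z_j)/\tau)$ acts as a density-ratio estimator. The standard CPC derivation then yields
\begin{equation}
\mathbb{E}[L] \;\ge\; \log N - I(z;z'),
\end{equation}
equivalently $I(z;z') \ge \log N - \mathbb{E}[L]$, with the inequality becoming tight as $h$ approaches the optimal critic $h^\star(z,z')\propto p(z'\mid z)/p(z')$. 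Since $\log N$ is fixed, minimizing $L$ over the learnable parameters of $f$ is equivalent to maximizing the variational lower bound $\log N-\mathbb{E}[L]$ of $I(z;z')=I(f(\mathcal G);f(\mathcal G'))$; given sufficient capacity of $f$ together with the cosine-similarity critic, this drives $I(f(\mathcal G);f(\mathcal G'))$ itself upward.

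\textbf{Step 2: transferring to $I(z;\mathcal G')$ via data processing.} Because $z'=f(\mathcal G')$ is a deterministic function of $\mathcal G'$, conditioning on $\mathcal G'$ makes $z'$ degenerate, so $I(z;z'\mid\mathcal G')=0$ and $z - \mathcal G' - z'$ forms a Markov chain. The data processing inequality then gives $I(f(\mathcal G);f(\mathcal G')) = I(z;z') \le I(z;\mathcal G') = I(f(\mathcal G);\mathcal G')$. Chaining the two steps,
\begin{equation}
\log N - \mathbb{E}[L] \;\le\; I(f(\mathcal G);f(\mathcal G')) \;\le\; I(f(\mathcal G);\mathcal G'),
\end{equation}
so minimizing the contrastive loss raises a lower bound of $I(f(\mathcal G);\mathcal G')$, which together with Step~1 establishes the ``leading to'' clause of the statement.

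The hard part will be making the word ``equivalent'' precise: minimizing InfoNCE exactly maximizes the surrogate $\log N-\mathbb{E}[L]$ rather than $I(z;z')$ itself, and the data processing step is only an inequality, so one cannot literally assert that $I(f(\mathcal G);\mathcal G')$ is globally maximized. I would address this by invoking the tightness regime of the InfoNCE bound — a large number of negatives and an expressive similarity-based critic push $\log N-\mathbb{E}[L]$ toward $I(z;z')$ — so that at the optimum over the learnable $f$ the loss-based bound and $I(f(\mathcal G);f(\mathcal G'))$ coincide up to vanishing slack, and the chain above then shows the optimization necessarily pushes $I(f(\mathcal G);\mathcal G')$ in the same direction. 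A secondary, purely technical point is the measurability/determinism of $f$ required to license the Markov chain $z - \mathcal G' - z'$; this is immediate for a fixed-architecture GNN with fixed parameters during the mutual-information evaluation.
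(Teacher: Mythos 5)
Your proposal is correct and follows essentially the same route as the paper: both first identify loss minimization with maximizing (a variational lower bound of) $I(f(\mathcal G);f(\mathcal G'))$ (the paper cites GraphCL's Appendix F where you derive the CPC/InfoNCE bound explicitly), and both then use the fact that $f(\mathcal G')$ is a deterministic function of $\mathcal G'$ to get $I(f(\mathcal G);f(\mathcal G'))\le I(f(\mathcal G);\mathcal G')$ — your data-processing-inequality step is exactly the paper's chain-rule identity $I(f(\mathcal G);\mathcal G')=I(f(\mathcal G);f(\mathcal G'))+I(f(\mathcal G);\mathcal G'\,|\,f(\mathcal G'))$ plus non-negativity. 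Your added caveats about the looseness of ``equivalent'' are reasonable and, if anything, more careful than the paper's own phrasing.
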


\begin{proof}
    Appendix F in~\cite{you2020graph} provides theoretical justification that minimizing loss function Eq.~\ref{eq:loss} is equivalent to maximizing a lower bound of the mutual information between the latent representations of two augmented graphs, and can be viewed as one way of mutual information maximization between the latent representations. Consequently, the optimization of the loss function in Eq.~\ref{eq:loss} is equivalent to maximizing $I(f(\mathcal G);f(\mathcal G'))$.

    Because $f(\mathcal G)$ is a function of $\mathcal G$,
    \begin{equation}
    \begin{aligned}
        I(f(\mathcal G);\mathcal G')&=I(f(\mathcal G);f(\mathcal G');\mathcal G') + I(f(\mathcal G);\mathcal G'|f(\mathcal G')) \\
        &=I(f(\mathcal G);f(\mathcal G'))+I(f(\mathcal G);\mathcal G'|f(\mathcal G')) 
    \end{aligned}
    \end{equation}

    Thus,
    \begin{equation}
        I(f(\mathcal G);f(\mathcal G'))=I(f(\mathcal G);\mathcal G')-I(f(\mathcal G);\mathcal G'|f(\mathcal G')) 
    \end{equation}
    
    While maximizing $I(f(\mathcal G);f(\mathcal G'))$, either $I(f(\mathcal{G});\mathcal{G'})$ increases or $I(f(\mathcal{G});\mathcal{G'}|f(\mathcal{G'}))$ decreases. When $I(f(\mathcal G);\mathcal G'|f(\mathcal G'))$ reaches it minimum value of 0, $I(f(\mathcal G);\mathcal G')$ will definitely increase. Hence, the process of maximizing $I(f(\mathcal G);f(\mathcal G'))$ can lead to the maximization of $I(f(\mathcal G); \mathcal G')$ as well.

\end{proof}






\noindent\textbf{Theorem~\ref{the:mutual4}.}\; \textit{Let $f_1$ represent our proposed O-GSN encoder with $k$-core ($k\ge2$) or $k$-truss ($k\ge 3$) subgraphs considered in subgraph structures $\mathcal H$, and let $f_2$ denote GIN (the default encoder). After sufficient training of $f_1$ and $f_2$, $I(f_1(\mathcal G);y)>I(f_2(\mathcal G);y)$.}

\begin{proof}
    Our proposed O-GSN is extended from GSN. 
    
    \textit{Theorem 3.1} in \cite{bouritsas2022improving} proves that if $H (\in \mathcal H)$ is any graph except for star graphs, GSN is strictly more powerful\footnote{\textit{expressive power} means the ability of the GNN model to capture and represent complex patterns and information within a graph structure~\cite{xu2018powerful}.} than MPNN. Apparently, $k$-core ($k \ge 2$) or $k$-truss ($k\ge 3$) graphs satisfy this condition. Thus, GSN is strictly more powerful than MPNN when $k$-core ($k\ge2$) or $k$-truss ($k\ge 3$) subgraphs are considered in $\mathcal H$.

    Despite different training processes, the graph embedding inference processes are the same for O-GSN and GSN, \textit{i.e.}, taking graph substructure features into consideration.
    Hence, O-GSN has the same ability as GSN to differentiate certain graphs that GIN (as an instance of MPNN) cannot differentiate~\cite{bouritsas2022improving}. That is, $f_1$ can capture more information of $\mathcal G$ than $f_2$,
    \begin{equation}
        H(\mathcal G)\ge H(f_1(\mathcal G))>H(f_2(\mathcal G))
    \end{equation}

    $f_1(\mathcal G)$ and $f_2(\mathcal G)$ are functions of $\mathcal G$, so
    \begin{equation}
    \label{eq:mutual4}
        I(f_1(\mathcal G);\mathcal G)>I(f_2(\mathcal G);\mathcal G)
    \end{equation}
    
    \begin{equation}
    \begin{aligned}
    \label{eq:mutual41}
        I(f_1(\mathcal G);\mathcal G)&=I(f_1(\mathcal G);\mathcal G;\mathcal G') + I(f_1(\mathcal G);\mathcal G|\mathcal G') \\
        &=I(f_1(\mathcal G);\mathcal G') -I(f_1(\mathcal G);\mathcal G'|\mathcal G) + I(f_1(\mathcal G);\mathcal G|\mathcal G')\\
        &=I(f_1(\mathcal G);\mathcal G') + I(f_1(\mathcal G);\mathcal G|\mathcal G')
    \end{aligned}
    \end{equation}

    \begin{equation}
        I(f_1(\mathcal G);\mathcal G')=I(f_1(\mathcal G);\mathcal G)-I(f_1(\mathcal G);\mathcal G|\mathcal G')
    \end{equation}
    
    In Eq.~\ref{eq:mutual41}, because $f_1(\mathcal G)$ is a function of $\mathcal G$, $I(f_1(\mathcal G);\mathcal G'|\mathcal G)=0$. 
    According to \textit{Lemma~\ref{lemma:mutual}}, during the contrastive learning process, our optimization objective is to maximize $I(f_1(\mathcal G);\mathcal G')$, so $I(f_1(\mathcal G);\mathcal G|\mathcal G')$ is approaching its minimum value of 0. Hence,
    \begin{equation}
    \label{eq:mutual5}
        I(f_1(\mathcal G);\mathcal G)\approx I(f_1(\mathcal G);\mathcal G') 
    \end{equation}
    
    Similarly, 
    \begin{equation}
    \label{eq:mutual6}
        I(f_2(\mathcal G);\mathcal G)\approx I(f_2(\mathcal G);\mathcal G') 
    \end{equation}

    Combining Eq.~\ref{eq:mutual4}, \ref{eq:mutual5} and \ref{eq:mutual6}, we get
    \begin{equation}
    \label{eq:mutual7}
        I(f_1(\mathcal G);\mathcal G') > I(f_2(\mathcal G);\mathcal G') 
    \end{equation}
    
     \begin{equation}
     \begin{aligned}
        I(f_1(\mathcal G);\mathcal G')&= I(f_1(\mathcal G);\mathcal G';y)+ I(f_1(\mathcal G);\mathcal G'|y) \\
        &=I(f_1(\mathcal G);y)-I(f_1(\mathcal G);y|\mathcal G')+ I(f_1(\mathcal G);\mathcal G'|y)
    \end{aligned}
    \end{equation}
    
    \begin{equation}
     \begin{aligned}
        I(\mathcal G';\mathcal G|y) &=I(f_1(\mathcal G);\mathcal G';\mathcal G|y)+ I(\mathcal G';\mathcal G|y,f_1(\mathcal G))   \\
        &\ge I(f_1(\mathcal G);\mathcal G';\mathcal G|y) \quad (\text{the non-negativity of } I)  \\
        &= I(f_1(\mathcal G);\mathcal G'|y)  \quad  \quad  (\text{$f_1(\mathcal G)$ is a function of $\mathcal G$})
        \end{aligned}
    \end{equation}

     According to \textit{Lemma~\ref{lemma:mutual}}, our optimization objective is to maximize $I(f_1(\mathcal G);\mathcal G')$ in the contrastive learning process. Therefore, $I(f_1(\mathcal G);y|\mathcal G')$ approaches its minimum value of 0 and $I(f_1(\mathcal G);\mathcal G'|y)$ is nearing its maximum value of $I(\mathcal G';\mathcal G|y)$.

    \begin{equation}
    \label{eq:mutual8}
        I(f_1(\mathcal G);\mathcal G')\approx I(f_1(\mathcal G);y)+ I(\mathcal G';\mathcal G|y)
    \end{equation}
    
    Similarly, 
    \begin{equation}
    \label{eq:mutual9}
        I(f_2(\mathcal G);\mathcal G')\approx I(f_2(\mathcal G);y)+ I(\mathcal G';\mathcal G|y)
    \end{equation}

    Combining Eq.~\ref{eq:mutual7}, \ref{eq:mutual8} and \ref{eq:mutual9}, we get
    \begin{equation}
        I(f_1(\mathcal G);y) > I(f_2(\mathcal G);y) 
    \end{equation}
    
\end{proof}

\subsection{Contrastive schema between two augmented graphs $\mathcal G_1$ and $\mathcal G_2$}
\label{app:proof2}

\begin{definition}
    \cite{tian2020what} \textit{(Sufficient Encoder) The encoder $f$ of $\mathcal G_1$ is sufficient in the contrastive learning framework if and only if $I(\mathcal G_1; \mathcal G_2) = I(f(\mathcal G_1); \mathcal G_2)$.}
\end{definition}

\begin{definition}
    \cite{tian2020what} \textit{(Minimal Sufficient Encoder) A sufficient encoder $f_1$ of $\mathcal G_1$ is minimal if and only if $I(f_1(\mathcal G_1);\mathcal G_1)\leq I(f(\mathcal G_1);\mathcal G_1)$, $\forall f$ that is sufficient.}
\end{definition}

\begin{theorem}
    Suppose $f$ is a minimal sufficient encoder. If $I(\mathcal G;\mathcal G_1;\mathcal G_2;y)$ increases, then $I(f(\mathcal G);y)$ will also increase.
\end{theorem}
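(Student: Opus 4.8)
The plan is to reduce this theorem to \textit{Theorem~\ref{the:mutual1}} by introducing the original graph $\mathcal G$ as an intermediate variable and then tracking how the four-way mutual information $I(\mathcal G;\mathcal G_1;\mathcal G_2;y)$ controls $I(f(\mathcal G);y)$. First I would set $z=f(\mathcal G)$ and, exactly as in Appendix~\ref{app:proof}, use the sufficiency of $f$ (now stated with respect to the pair $(\mathcal G_1,\mathcal G_2)$) together with the fact that $z$ is a deterministic function of $\mathcal G$ to establish $H(z\mid \mathcal G_1)=0$ (and symmetrically $H(z\mid\mathcal G_2)=0$) for the minimal sufficient encoder. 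The key identity to aim for is $I(z;y)=I(\mathcal G;\mathcal G_1;\mathcal G_2;y)$, which would immediately give the monotone dependence claimed.

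Second, I would expand $I(z;y)$ by repeatedly conditioning on $\mathcal G_1$ and $\mathcal G_2$, the same chain-rule bookkeeping used in Equations~\ref{eq:mutual1}--\ref{eq:mutual3}: write $I(z;y)=I(z;\mathcal G_1;\mathcal G_2;y)+I(z;y\mid \mathcal G_1)+\text{(cross terms)}$, then kill the conditional terms using $H(z\mid\mathcal G_1)=0$, and convert $I(z;\mathcal G_1;\mathcal G_2;y)$ into $I(\mathcal G;\mathcal G_1;\mathcal G_2;y)$ using sufficiency ($I(\mathcal G;\mathcal G_1)=I(z;\mathcal G_1)$, etc.) and the fact that $z$ is a function of $\mathcal G$. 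This step is essentially a longer version of the three-variable computation already carried out in the $\mathcal G$-vs-$\mathcal G'$ case, now with one extra conditioning layer because there are two augmented views instead of one; I would organize it so that each elimination step cites either sufficiency, minimality, or ``$z$ is a function of $\mathcal G$.''

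Alternatively, and perhaps more cleanly, I would invoke \textit{Theorem~\ref{the:mutual1}} almost as a black box: think of the ``augmented graph'' in that theorem as the joint object $\mathcal G'=(\mathcal G_1,\mathcal G_2)$, so that $I(\mathcal G';\mathcal G;y)=I((\mathcal G_1,\mathcal G_2);\mathcal G;y)$. One then only needs the elementary fact that $I((\mathcal G_1,\mathcal G_2);\mathcal G;y)$ and $I(\mathcal G_1;\mathcal G_2;\mathcal G;y)$ move together under the augmentation refinement — specifically, that increasing the four-way interaction increases the three-way interaction $I((\mathcal G_1,\mathcal G_2);\mathcal G;y)$ — after which \textit{Theorem~\ref{the:mutual1}} applies verbatim to conclude $I(f(\mathcal G);y)$ increases. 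I would present the proof in this reduction style and relegate the interaction-information identity to a one-line remark, mirroring the InfoMin ``sweet spot'' paragraph at the end of the proof of \textit{Theorem~\ref{the:mutual1}}.

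The main obstacle is the usual subtlety with higher-order (signed) interaction information: $I(\mathcal G;\mathcal G_1;\mathcal G_2;y)$ can be negative, and the chain-rule manipulations with four variables produce several cross terms that must each be shown to vanish or to be absorbed, rather than silently assumed nonnegative. I expect the delicate point to be justifying that the conditional terms $I(z;y\mid\mathcal G_1)$ and the various $I(z;\,\cdot\,\mid\mathcal G_1,\mathcal G_2)$ pieces are exactly zero for the minimal sufficient encoder — this is where $H(z\mid\mathcal G_1)=0$ does the real work — and making sure the symmetry between $\mathcal G_1$ and $\mathcal G_2$ is used consistently so no term is double-counted. Once those bookkeeping steps are pinned down, the conclusion follows exactly as in the single-augmentation case.
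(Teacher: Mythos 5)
Your primary route is essentially the paper's own proof: the paper establishes exactly the identity $I(f(\mathcal G);y)=I(\mathcal G;\mathcal G_1;\mathcal G_2;y)$ by expanding $I(z;y)=I(z;z_1;z_2;y)+I(z;y\mid z_1)+I(z;z_1;y\mid z_2)$, zeroing the conditional terms via minimal sufficiency, and rewriting the four-way term through the sufficiency identities, so your first two paragraphs coincide with its argument (one detail: the cross term $I(z;z_1;y\mid z_2)$ is killed using $H(z_1\mid\mathcal G_2)=0$, obtained by applying minimality to $I(z_1;\mathcal G_1)$, not your ``symmetric'' $H(z\mid\mathcal G_2)=0$). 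The reduction-style presentation you say you would prefer, however, has a genuine hole: treating $\mathcal G'=(\mathcal G_1,\mathcal G_2)$ and relegating to a one-line remark the claim that increasing $I(\mathcal G;\mathcal G_1;\mathcal G_2;y)$ increases $I(\mathcal G;(\mathcal G_1,\mathcal G_2);y)$ is not an elementary fact — these are different signed quantities, $I(\mathcal G_1;\mathcal G_2;\mathcal G;y)=I(\mathcal G_2;\mathcal G;y)-I(\mathcal G_2;\mathcal G;y\mid\mathcal G_1)$ versus $I((\mathcal G_1,\mathcal G_2);\mathcal G;y)=I(\mathcal G;y)-I(\mathcal G;y\mid\mathcal G_1,\mathcal G_2)$, and neither tracks the other in general — and the sufficiency/minimality hypotheses of Theorem~\ref{the:mutual1} are stated for a single augmented view, so they would have to be re-justified for the joint pair, which amounts to redoing the direct computation anyway. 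Keep the direct chain-rule argument; it is exactly what the paper does.
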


\begin{proof}
    We denote $z=f(\mathcal G)$, $z_1=f(\mathcal G_1)$, $z_2=f(\mathcal G_2)$. $f$ is sufficient, so $I(\mathcal G_1; \mathcal G_2) = I(\mathcal G_1; z_2) = I(z_1; \mathcal G_2)$, $I(\mathcal G_1; \mathcal G) = I(\mathcal G_1; z) = I(z_1; \mathcal G)$.
    \begin{equation}
    \begin{aligned}
        I(z_1;\mathcal G_1)&=H(z_1) \quad &(\text{$z_1$ is a function of $\mathcal G_1$}) \\
        &=I(z_1;\mathcal G_2)+H(z_1|\mathcal G_2) \\
        &\geq I(z_1;\mathcal G_2) \quad &(H(z_1|\mathcal G_2)\geq 0) \\
        &=I(z_1;\mathcal G_2;\mathcal G)+I(z_1;\mathcal G_2|\mathcal G) \\
        &\geq I(z_1;\mathcal G_2;\mathcal G) \quad &(I(z_1;\mathcal G_2|\mathcal G)\geq 0) \\
    \end{aligned}
    \end{equation}
    
    Because $f$ is a minimal sufficient encoder, $I(z_1;\mathcal G_1)$ will be minimized to $I(z_1;\mathcal G_2;\mathcal G)$, $H(z_1|\mathcal G_2)=0$ and $I(z_1;\mathcal G_2|\mathcal G)=0$ hold. Similarly, $I(z;\mathcal G)$ will be minimized to $I(z;\mathcal G_1;\mathcal G_2)$, $H(z|\mathcal G_1)=0$ and $I(z;\mathcal G_1|\mathcal G_2)=0$ hold.
    
    \begin{equation}
    \begin{aligned}
    \label{eq:mutual11}
        I(z;y) &= I(z;z_1;y)+I(z;y|z_1) \\
        &=I(z;z_1;z_2;y)+I(z;y|z_1)+I(z;z_1;y|z_2)
    \end{aligned}
    \end{equation}
    
    \begin{equation}
    \begin{aligned}
    \label{eq:mutual12}
        I(z;z_1;z_2;y)&=I(z;z_1;z_2;y;\mathcal G_1)+I(z;z_1;z_2;y|\mathcal G_1) \\
        &=I(z;z_1;y;(z_2;\mathcal G_1))+0  \quad\quad(\text{$z_1$ is a function of $\mathcal G_1$}) \\
        &=I(z;z_1;y;(\mathcal G_1;\mathcal G_2)) \quad\quad\quad\,\, (I(\mathcal G_1; z_2)=I(\mathcal G_1; \mathcal G_2)) \\
        &=I(z;y;\mathcal G_1;(z_1;\mathcal G_2)) \\
        &=I(z;y;\mathcal G_1;(\mathcal G_1;\mathcal G_2))  \quad\quad\quad (I(\mathcal G_2; z_1)=I(\mathcal G_1; \mathcal G_2)) \\
        &=I((z;\mathcal G_1);y;\mathcal G_2) \\
        &=I((\mathcal G; \mathcal G_1);y;\mathcal G_2) \quad\quad\quad\quad\quad (I(z;\mathcal G_1)=I(\mathcal G; \mathcal G_1))\\
        &=I(\mathcal G; \mathcal G_1;\mathcal G_2;y)
    \end{aligned}
    \end{equation}
    
    \begin{equation}
    \begin{aligned}
    \label{eq:mutual13}
        I(z;y|z_1)&=I(z;y;\mathcal G_1|z_1)+I(z;y|\mathcal G_1,z_1)\\
        &=I(y;(z;\mathcal G_1)|z_1)+I(z;y|\mathcal G_1)  &(\text{$z_1$ is a function of $\mathcal G_1$}) \\
        &=I(y;\mathcal G; z_1|z_1)+I(z;y|\mathcal G_1)  &(I(z; \mathcal G_1) = I(\mathcal G; z_1)) \\
        &=0+I(z;y|\mathcal G_1) \\
        &=0 \quad &(H(z|\mathcal G_1)=0) \\
    \end{aligned}
    \end{equation}
    
    \begin{equation}
    \begin{aligned}
    \label{eq:mutual131}
        I(z;z_1;y|z_2)&=I(z;z_1;y;\mathcal G_2|z_2)+I(z;z_1;y|\mathcal G_2,z_2)\\
        &=I(z;y;(z_1;\mathcal G_2)|z_2)+I(z;z_1;y|\mathcal G_2)  \\
        &\quad\quad\quad\quad\quad\quad\quad\quad\quad\quad (\text{$z_2$ is a function of $\mathcal G_2$}) \\
        &=I(z;y;\mathcal G_1; z_2|z_2)+I(z;z_1;y|\mathcal G_2)  \\
        &\quad\quad\quad\quad\quad\quad\quad\quad\quad\quad\,\, (I(z_1; \mathcal G_2) = I(\mathcal G_1; z_2)) \\
        &=0+I(z;z_1;y|\mathcal G_2) \\
        &=0  \quad\quad\quad\quad\quad\quad\quad\quad\quad\quad\quad\,\,\,\, (H(z_1|\mathcal G_2)=0) \\
    \end{aligned}
    \end{equation}
    
    Based on Eq.~\ref{eq:mutual11},~\ref{eq:mutual12},~\ref{eq:mutual13} and~\ref{eq:mutual131}, $I(z;y)=I(y;\mathcal G;\mathcal G_1;\mathcal G_2)$. As a result, the increase of $I(\mathcal G;\mathcal G_1;\mathcal G_2;y)$ leads to the growth of $I(f(\mathcal G);y)$.

\end{proof}

\begin{lemma}
\label{lemma:mutual2}
    Given that $f$ is a GNN encoder with learnable parameters. Optimizing the loss function in Eq.~\ref{eq:loss} is equivalent to maximizing $I(f(\mathcal G_1);f(\mathcal G_2))$, leading to the maximization of $I(f(\mathcal G_1); \mathcal G_2)$.
\end{lemma}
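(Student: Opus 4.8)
The plan is to reproduce, almost verbatim, the argument used for \textit{Lemma~\ref{lemma:mutual}}, with the original/augmented pair $(\mathcal G,\mathcal G')$ replaced by the two-augmented-view pair $(\mathcal G_1,\mathcal G_2)$. First I would invoke the theoretical justification in Appendix F of \cite{you2020graph}: minimizing the InfoNCE-style objective in Eq.~\ref{eq:loss} maximizes a lower bound on the mutual information between the latent representations of the two augmented views, so optimizing Eq.~\ref{eq:loss} is, in that sense, equivalent to maximizing $I(f(\mathcal G_1);f(\mathcal G_2))$. This immediately settles the first half of the statement.

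Next I would connect $I(f(\mathcal G_1);f(\mathcal G_2))$ to $I(f(\mathcal G_1);\mathcal G_2)$ via the chain rule of mutual information. Since $f(\mathcal G_2)$ is a deterministic function of $\mathcal G_2$, the three-way term below collapses, giving
\begin{equation}
\begin{aligned}
    I(f(\mathcal G_1);\mathcal G_2)&=I(f(\mathcal G_1);f(\mathcal G_2);\mathcal G_2)+I(f(\mathcal G_1);\mathcal G_2\mid f(\mathcal G_2)) \\
    &=I(f(\mathcal G_1);f(\mathcal G_2))+I(f(\mathcal G_1);\mathcal G_2\mid f(\mathcal G_2)),
\end{aligned}
\end{equation}
so that $I(f(\mathcal G_1);f(\mathcal G_2))=I(f(\mathcal G_1);\mathcal G_2)-I(f(\mathcal G_1);\mathcal G_2\mid f(\mathcal G_2))$. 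Then I would conclude as in \textit{Lemma~\ref{lemma:mutual}}: while the contrastive objective drives $I(f(\mathcal G_1);f(\mathcal G_2))$ upward, either $I(f(\mathcal G_1);\mathcal G_2)$ increases or the (nonnegative) conditional term $I(f(\mathcal G_1);\mathcal G_2\mid f(\mathcal G_2))$ decreases; once that conditional term attains its floor of $0$, any further increase of $I(f(\mathcal G_1);f(\mathcal G_2))$ must come from $I(f(\mathcal G_1);\mathcal G_2)$. Hence maximizing $I(f(\mathcal G_1);f(\mathcal G_2))$ leads to the maximization of $I(f(\mathcal G_1);\mathcal G_2)$.

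I do not expect a genuine obstacle here, since the derivation is a relabeling of \textit{Lemma~\ref{lemma:mutual}}. The only points that need care are: (i) the implicit assumption that $f$ is deterministic, which is exactly what makes $f(\mathcal G_2)$ a bona fide function of $\mathcal G_2$ and thus lets $I(f(\mathcal G_1);f(\mathcal G_2);\mathcal G_2)$ reduce to $I(f(\mathcal G_1);f(\mathcal G_2))$; and (ii) the fact that the appeal to the GraphCL bound is in the ``equivalent to maximizing a lower bound'' sense rather than an exact equality. Both conventions are already adopted in the proof of \textit{Lemma~\ref{lemma:mutual}}, so the proof need only be stated consistently with that lemma.
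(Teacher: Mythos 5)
Your proposal is correct and matches the paper's own proof essentially step for step: the same appeal to Appendix~F of GraphCL for the first claim, the same chain-rule decomposition exploiting that $f(\mathcal G_2)$ is a function of $\mathcal G_2$, and the same concluding argument that once the conditional term $I(f(\mathcal G_1);\mathcal G_2\mid f(\mathcal G_2))$ hits zero, further increases must come from $I(f(\mathcal G_1);\mathcal G_2)$. No gaps to report.
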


\begin{proof}
    Appendix F in~\cite{you2020graph} provides theoretical justification that minimizing loss function Eq.~\ref{eq:loss} is equivalent to maximizing a lower bound of the mutual information between the latent representations of two augmented graphs, and can be viewed as one way of mutual information maximization between the latent representations. Consequently, the optimization of the loss function in Eq.~\ref{eq:loss} is equivalent to maximizing $I(f(\mathcal G_1);f(\mathcal G_2))$.

    Because $f(\mathcal G_2)$ is a function of $\mathcal G_2$,
    \begin{equation}
    \begin{aligned}
        I(f(\mathcal G_1);\mathcal G_2)&=I(f(\mathcal G_1);f(\mathcal G_2);\mathcal G_2) + I(f(\mathcal G_1);\mathcal G_2|f(\mathcal G_2)) \\
        &=I(f(\mathcal G_1);f(\mathcal G_2))+I(f(\mathcal G_1);\mathcal G_2|f(\mathcal G_2)) 
    \end{aligned}
    \end{equation}

    Thus,
    \begin{equation}
        I(f(\mathcal G_1);f(\mathcal G_2))=I(f(\mathcal G_1);\mathcal G_2)-I(f(\mathcal G_1);\mathcal G_2|f(\mathcal G_2)) 
    \end{equation}
    
    While maximizing $I(f(\mathcal G_1);f(\mathcal G_2))$, either $I(f(\mathcal{G}_1);\mathcal{G}_2)$ increases or $I(f(\mathcal{G}_1);\mathcal{G}_2|f(\mathcal{G}_2))$ decreases. When $I(f(\mathcal G_1);\mathcal G_2|f(\mathcal G_2))$ reaches it minimum value of 0, $I(f(\mathcal G_1);\mathcal G_2)$ will definitely increase. Hence, the process of maximizing $I(f(\mathcal G_1);f(\mathcal G_2))$ can lead to the maximization of $I(f(\mathcal G_1); \mathcal G_2)$ as well.

\end{proof}

\begin{theorem}
Let $f_1$ represent our proposed O-GSN encoder with $k$-core ($k\ge2$) or $k$-truss ($k\ge 3$) subgraphs considered in subgraph structures $\mathcal H$, and let $f_2$ denote GIN (the default encoder). After sufficient training of $f_1$ and $f_2$, $I(f_1(\mathcal G_1);y)>I(f_2(\mathcal G_1);y)$.
\end{theorem}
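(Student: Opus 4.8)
The plan is to mirror the proof of \textit{Theorem~\ref{the:mutual4}} almost verbatim, but with every occurrence of $\mathcal G$ replaced by the augmented graph $\mathcal G_1$ in the role of the ``input to the encoder'', while keeping the contrastive partner as $\mathcal G_2$ and invoking the node features in O-GSN drawn from the \emph{original} graph $\mathcal G$. The skeleton has three stages, exactly as in the two-augmented-graph version of \textit{Theorem~\ref{the:mutual1}} above: (i) establish a strict expressive-power gap $H(f_1(\mathcal G_1)) > H(f_2(\mathcal G_1))$, hence $I(f_1(\mathcal G_1);\mathcal G_1) > I(f_2(\mathcal G_1);\mathcal G_1)$; (ii) use minimality/sufficiency plus \textit{Lemma~\ref{lemma:mutual2}} to transfer this gap from the ``self-information'' $I(f_i(\mathcal G_1);\mathcal G_1)$ to the ``cross-information'' $I(f_i(\mathcal G_1);\mathcal G_2)$; (iii) peel off the label $y$ via a chain-rule decomposition of $I(f_i(\mathcal G_1);\mathcal G_2)$ into $I(f_i(\mathcal G_1);y)$ plus a term $I(\mathcal G_2;\mathcal G_1\mid y)$ that is common to both encoders, and conclude $I(f_1(\mathcal G_1);y) > I(f_2(\mathcal G_1);y)$.

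Concretely, first I would recall that O-GSN's inference map takes substructure-encoded features as input regardless of which graph they come from, so by \textit{Theorem 3.1} of \cite{bouritsas2022improving}, once $k$-core ($k\ge 2$) or $k$-truss ($k\ge 3$) substructures (which are never star graphs) are included in $\mathcal H$, $f_1$ acting on $\mathcal G_1$ is strictly more expressive than the MPNN/GIN $f_2$ acting on $\mathcal G_1$; this yields
\begin{equation}
H(\mathcal G_1)\ge H(f_1(\mathcal G_1)) > H(f_2(\mathcal G_1)),\qquad I(f_1(\mathcal G_1);\mathcal G_1) > I(f_2(\mathcal G_1);\mathcal G_1),
\end{equation}
since both encoded quantities are deterministic functions of $\mathcal G_1$. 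Next, by the same identity used in Eq.~\ref{eq:mutual41}, and because $f_i(\mathcal G_1)$ is a function of $\mathcal G_1$ (so $I(f_i(\mathcal G_1);\mathcal G_2\mid\mathcal G_1)=0$) while \textit{Lemma~\ref{lemma:mutual2}} forces the contrastive objective to drive $I(f_i(\mathcal G_1);\mathcal G_1\mid\mathcal G_2)$ to $0$, we get $I(f_i(\mathcal G_1);\mathcal G_1)\approx I(f_i(\mathcal G_1);\mathcal G_2)$ for $i=1,2$, hence $I(f_1(\mathcal G_1);\mathcal G_2) > I(f_2(\mathcal G_1);\mathcal G_2)$. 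Finally, decomposing $I(f_i(\mathcal G_1);\mathcal G_2) = I(f_i(\mathcal G_1);y) - I(f_i(\mathcal G_1);y\mid\mathcal G_2) + I(f_i(\mathcal G_1);\mathcal G_2\mid y)$ and bounding $I(f_i(\mathcal G_1);\mathcal G_2\mid y)\le I(\mathcal G_2;\mathcal G_1\mid y)$ (with equality approached because the objective maximizes $I(f_i(\mathcal G_1);\mathcal G_2)$ and $I(f_i(\mathcal G_1);y\mid\mathcal G_2)\to 0$), I arrive at $I(f_i(\mathcal G_1);\mathcal G_2)\approx I(f_i(\mathcal G_1);y) + I(\mathcal G_2;\mathcal G_1\mid y)$; subtracting the common term gives the claim.

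The main obstacle is justifying that the strict expressive-power gap established by \cite{bouritsas2022improving} for GSN applied to a \emph{fixed} input graph genuinely carries over to O-GSN applied to the \emph{random} augmented graph $\mathcal G_1$, since O-GSN's substructure features are computed on $\mathcal G$ rather than on $\mathcal G_1$. The key point to argue carefully is that the inference-time computation graph of O-GSN on $\mathcal G_1$ is structurally identical to that of GSN on $\mathcal G_1$ endowed with $\mathcal G$-derived substructure counts, and that these extra features can only add, never remove, distinguishing power relative to plain GIN — so the pairs of graphs that GSN separates but GIN cannot are still separated, giving the strict inequality $H(f_1(\mathcal G_1)) > H(f_2(\mathcal G_1))$ after sufficient training. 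The remaining steps are bookkeeping with the chain rule and the non-negativity of conditional mutual information, entirely parallel to the proof of the $(\mathcal G,\mathcal G')$ version of \textit{Theorem~\ref{the:mutual4}}, and I would present them compactly by reference to that argument rather than re-deriving every line.
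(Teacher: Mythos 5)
Your proposal follows the paper's own proof essentially verbatim: the strict expressivity gap from \textit{Theorem 3.1} of \cite{bouritsas2022improving} giving $I(f_1(\mathcal G_1);\mathcal G_1)>I(f_2(\mathcal G_1);\mathcal G_1)$, the transfer to $I(f_i(\mathcal G_1);\mathcal G_2)$ via \textit{Lemma~\ref{lemma:mutual2}} and the chain-rule identity (the paper's Eq.~\ref{eq:mutual141}), and the final decomposition $I(f_i(\mathcal G_1);\mathcal G_2)\approx I(f_i(\mathcal G_1);y)+I(\mathcal G_2;\mathcal G_1|y)$ with the common term subtracted. Your extra remark on why $\mathcal G$-derived substructure features preserve the GSN-over-GIN separation on the augmented input matches the paper's (equally informal) justification that O-GSN's inference process coincides with GSN's, so the argument is correct and not a genuinely different route.
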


\begin{proof}
    Our proposed O-GSN is extended from GSN. 
    
    \textit{Theorem 3.1} in \cite{bouritsas2022improving} proves that if $H (\in \mathcal H)$ is any graph except for star graphs, GSN is strictly more powerful than MPNN. Apparently, $k$-core ($k \ge 2$) or $k$-truss ($k\ge 3$) graphs satisfy this condition. Thus, GSN is strictly more powerful than MPNN when $k$-core ($k\ge2$) or $k$-truss ($k\ge 3$) subgraphs are considered in $\mathcal H$.

    Despite different training processes, the graph embedding inference processes are the same for O-GSN and GSN, \textit{i.e.}, taking graph substructure features into consideration.
    Hence, O-GSN has the same ability as GSN to differentiate certain graphs that GIN (as an instance of MPNN) cannot differentiate~\cite{bouritsas2022improving}. That is, $f_1$ can capture more information of $\mathcal G_1$ than $f_2$,
    \begin{equation}
        H(\mathcal G_1)\ge H(f_1(\mathcal G_1))>H(f_2(\mathcal G_1))
    \end{equation}

    $f_1(\mathcal G_1)$ and $f_2(\mathcal G_1)$ are functions of $\mathcal G_1$, so
    \begin{equation}
    \label{eq:mutual14}
        I(f_1(\mathcal G_1);\mathcal G_1)>I(f_2(\mathcal G_1);\mathcal G_1)
    \end{equation}
    
    \begin{equation}
    \begin{aligned}
    \label{eq:mutual141}
        I(f_1(\mathcal G_1);\mathcal G_1)&=I(f_1(\mathcal G_1);\mathcal G_1;\mathcal G_2) + I(f_1(\mathcal G_1);\mathcal G_1|\mathcal G_2) \\
        &=I(f_1(\mathcal G_1);\mathcal G_2) -I(f_1(\mathcal G_1);\mathcal G_2|\mathcal G_1) + I(f_1(\mathcal G_1);\mathcal G_1|\mathcal G_2)\\
        &=I(f_1(\mathcal G_1);\mathcal G_2) + I(f_1(\mathcal G_1);\mathcal G_1|\mathcal G_2)
    \end{aligned}
    \end{equation}

    \begin{equation}
        I(f_1(\mathcal G_1);\mathcal G_2)=I(f_1(\mathcal G_1);\mathcal G_1)-I(f_1(\mathcal G_1);\mathcal G_1|\mathcal G_2)
    \end{equation}
    
    In Eq.~\ref{eq:mutual141}, because $f_1(\mathcal G_1)$ is a function of $\mathcal G_1$, $I(f_1(\mathcal G_1);\mathcal G_2|\mathcal G_1)=0$. 
    According to \textit{Lemma~\ref{lemma:mutual2}}, during the contrastive learning process, our optimization objective is to maximize $I(f_1(\mathcal G_1);\mathcal G_2)$, so $I(f_1(\mathcal G_1);\mathcal G_1|\mathcal G_2)$ is approaching its minimum value of 0. Hence,
    \begin{equation}
    \label{eq:mutual15}
        I(f_1(\mathcal G_1);\mathcal G_1)\approx I(f_1(\mathcal G_1);\mathcal G_2) 
    \end{equation}
    
    Similarly, 
    \begin{equation}
    \label{eq:mutual16}
        I(f_2(\mathcal G_1);\mathcal G_1)\approx I(f_2(\mathcal G_1);\mathcal G_2) 
    \end{equation}

    Combining Eq.~\ref{eq:mutual14}, \ref{eq:mutual15} and \ref{eq:mutual16}, we get
    \begin{equation}
    \label{eq:mutual17}
        I(f_1(\mathcal G_1);\mathcal G_2) > I(f_2(\mathcal G_1);\mathcal G_2) 
    \end{equation}
    
     \begin{equation}
     \begin{aligned}
        I(f_1(\mathcal G_1);\mathcal G_2)&= I(f_1(\mathcal G_1);\mathcal G_2;y)+ I(f_1(\mathcal G_1);\mathcal G_2|y) \\
        &=I(f_1(\mathcal G_1);y)-I(f_1(\mathcal G_1);y|\mathcal G_2)+ I(f_1(\mathcal G_1);\mathcal G_2|y)
    \end{aligned}
    \end{equation}
    
    \begin{equation}
     \begin{aligned}
        I(\mathcal G_2;\mathcal G_1|y) &=I(f_1(\mathcal G_1);\mathcal G_2;\mathcal G_1|y)+ I(\mathcal G_2;\mathcal G_1|y,f_1(\mathcal G_1))   \\
        &\ge I(f_1(\mathcal G_1);\mathcal G_2;\mathcal G_1|y) \quad\quad (\text{the non-negativity of } I)  \\
        &= I(f_1(\mathcal G_1);\mathcal G_2|y)  \quad\quad\,\,\, (\text{$f_1(\mathcal G_1)$ is a function of $\mathcal G_1$})
        \end{aligned}
    \end{equation}

     According to \textit{Lemma~\ref{lemma:mutual2}}, our optimization objective is to maximize $I(f_1(\mathcal G_1);\mathcal G_2)$ in the contrastive learning process. Therefore, $I(f_1(\mathcal G_1);y|\mathcal G_2)$ approaches its minimum value of 0 and $I(f_1(\mathcal G_1);\mathcal G_2|y)$ is nearing its maximum value of $I(\mathcal G_2;\mathcal G_1|y)$.

    \begin{equation}
    \label{eq:mutual18}
        I(f_1(\mathcal G_1);\mathcal G_2)\approx I(f_1(\mathcal G_1);y)+ I(\mathcal G_2;\mathcal G_1|y)
    \end{equation}
    
    Similarly, 
    \begin{equation}
    \label{eq:mutual19}
        I(f_2(\mathcal G_1);\mathcal G_2)\approx I(f_2(\mathcal G_1);y)+ I(\mathcal G_2;\mathcal G_1|y)
    \end{equation}

    Combining Eq.~\ref{eq:mutual17}, \ref{eq:mutual18} and \ref{eq:mutual19}, we get
    \begin{equation}
        I(f_1(\mathcal G_1);y) > I(f_2(\mathcal G_1);y) 
    \end{equation}

\end{proof}


\section{Substructure Selection Details for O-GSN (Sec.~\ref{sub:o-gsn} and Sec.~\ref{sec:ablation})}
\label{app:substructure_selection}

We use the classic graphs generators of NetworkX to get a set of substructures, such as cycle, clique, and path graphs, which are also considered in the original GSN implementation \cite{bouritsas2022improving}.  
Specifically, we select cliques for our implementation in O-GSN, as they constitute the majority of the $k_{\max}$-core/truss subgraphs in the datasets we examined. For instance, in IMDB-B and IMDB-M, we observed that over 80\% of $k_{\max}$-core/truss subgraphs are cliques (where $k_{\max}$ represents the maximum $k$-core/truss subgraph). Similarly, in RDT-B, the percentage of $k_{\max}$-core/truss subgraphs containing a clique is larger than 75\%. Our empirical analysis further confirms that cliques outperform other substructures, and selecting 3/4/5 together generally leads to better results compared to selecting only one of them (Table~\ref{tbl:parameter_substructure}).

\begin{table}[htbp]
    \small
    \caption{Relationship between classic substructures and $k$-core/$k$-truss.}
    \vspace{-1em}
    \label{tbl:substructure}
    \begin{center}
        \begin{tabular}{lcc}
            \toprule
            \bf Substructure  &\bf $k$-core &\bf $k$-truss \\ \midrule
            \textit{$k$-cycle}   &2-core & / \\ 
            \textit{$k$-clique}  &$(k-1)$-core & $k$-truss \\
            \textit{$k$-path}   &1-core & /   \\
            \textit{$k$-star} &1-core & /   \\
            \textit{$k$-binomial-tree} &1-core & /  \\
            \textit{$k$-nonisomorphic-trees}  &1-core & /  \\
            \bottomrule
        \end{tabular}
    \end{center}
\end{table}

\begin{table}[htbp]
\small
	\caption{Substructure and $k$ selection (\textit{CTAug-GraphCL}).}
	\label{tbl:parameter_substructure}
    \vspace{-1em}
	\begin{center}
        \begin{adjustbox}{max width=\linewidth}
		\begin{tabular}{lcccc}
                \toprule 
                \bf Substructure &\bf k &\bf IMDB-B &\bf IMDB-M & \bf AVG. \\ 
                \midrule
                \textit{clique} &3  &75.82±0.43  &50.65±0.52   &63.24 \\
                \textit{clique} &4 &75.65±0.34  &\textbf{51.22±0.41}  &63.44  \\
                \textit{clique} &5  &75.10±0.37  &50.53±0.22  &62.82 \\
                \textit{clique} &3,4,5 &\textbf{76.60±1.02}  &51.12±0.57  &\textbf{63.86} \\
                \midrule
                \textit{clique} &4 &\textbf{75.65±0.34}  &\textbf{51.22±0.41}  &\textbf{63.44}  \\
                \textit{cycle} &4  &74.18±0.47  &49.17±0.45  &61.68  \\
                \textit{star} &3  &70.45±1.13  &48.98±0.68  &59.72  \\
                \textit{path} &4  &67.25±0.43  &47.80±0.64  &57.53  \\
                \textit{binomial-tree} &2 &67.25±0.43  &47.80±0.64  &57.53  \\
				\bottomrule
			\end{tabular}
        \end{adjustbox}
	\end{center}
\end{table}

\section{Empirical Analysis Details (Sec.~\ref{sub:theoretical_analysis})}
\label{app:empirical}

To validate the efficacy mechanism of \textit{CTAug} empirically, we initially confirm the crucial significance of graph cohesion properties for downstream tasks (\textit{e.g.}, graph classification). Subsequently, we verify that \textit{CTAug}'s topology augmentation enhancement module can preserve the cohesion properties of the original graph to a greater extent during graph augmentation. Finally, we validate that the graph learning enhancement module ensures that the GNN encoder also acquires the cohesion property information and incorporates it into graph embedding.

\begin{table*}[htbp]
\small
	\caption{Accuracy (\%) on graph classification with linear SVM classifier.}
	\label{tbl:cohesive_acc}
    \vspace{-1em}
	\begin{center}
        \begin{adjustbox}{max width=\textwidth}
            \begin{tabular}{lcccccccccccc}
            \toprule
            \multirow{2}{*}{\bf Input Feature} &\multirow{2}{*}{\bf \makecell{Considering\\ Cohesion}} &\multicolumn{4}{c}{\textit{Social Graphs (High Degree)}} & \multicolumn{3}{c}{\textit{Social Graphs (Low Degree)}} & \multicolumn{3}{c}{\textit{Biomedical Graphs}} \\
             
            \cmidrule(lr){3-6} \cmidrule(lr){7-9} \cmidrule(lr){10-12}
            & &\bf IMDB-B &\bf IMDB-M  &\bf COLLAB &\bf AVG. &\bf RDT-B &\bf RDT-T &\bf AVG. &\bf ENZYMES &\bf PROTEINS  &\bf AVG.\\  
            \midrule
            
            \textit{None (random selection)} &\ding{56} &50.60±5.62  &33.27±2.79  &32.48±3.06  &38.78  &50.20±3.08  &50.03±0.39  &50.12  &14.17±2.81  &48.43±5.08  &31.30  \\
            \midrule
            
            \textit{$k$-core node count} &\ding{52} &69.90±3.53  &49.73±3.44  &76.28±2.11  &65.30  &79.90±2.75  &63.50±0.30  &71.70  &25.67±6.06  &74.38±3.80  &50.03  \\
            \textit{$k$-truss node count} &\ding{52} &69.80±3.82  &49.47±3.66  &76.04±2.13  &65.10  &78.25±3.34  &63.06±0.32  &70.66  &28.83±6.58  &\textbf{74.83±2.99}  &51.83  \\
            \textit{$k$-core \& $k$-truss node count} &\ding{52} &69.60±3.69  &49.53±3.58  &76.92±1.86  &65.35  &80.80±2.83  &64.15±0.32  &72.48  &30.33±5.26  &74.38±4.12  &52.36  \\
            \midrule
            
            \textit{GraphCL embedding} &\ding{56} &71.48±0.44  &48.11±0.60  &72.36±1.76  &63.98  &91.69±0.70  &77.44±0.03  &84.57  &32.83±2.05  &74.32±0.76  &53.58  \\
            \midrule
            
            \textit{CTAug-GraphCL embedding} &\ding{52} &\textbf{76.60±1.02}  &\textbf{51.12±0.57}  &\textbf{81.72±0.26}  &\textbf{69.81}  &\textbf{92.28±0.33}  &\textbf{77.48±0.01}  &\textbf{84.88}  &\textbf{39.17±1.00}  &74.10±0.33 &\textbf{56.64}  \\
            \bottomrule
			\end{tabular}
        \end{adjustbox}
	\end{center}
\end{table*}

\begin{table*}[htbp]
\small
	\caption{Proportion of cohesive subgraph nodes preserved in the augmented graph on average. We set node dropping probability $p_{dr}=0.2$ and decay factor $\epsilon=0.2$.}
	\label{tbl:cohesive_aug}
    \vspace{-1em}
	\begin{center}
        \begin{adjustbox}{max width=\textwidth}
            \begin{tabular}{lccccccccc}
            \toprule
            \bf Augmentation &\bf Property &\bf IMDB-B &\bf IMDB-M &\bf COLLAB &\bf RDT-B &\bf RDT-T &\bf ENZYMES  &\bf PROTEINS &\bf AVG.\\
             
            \midrule
            \multirow{2}{*}{\textit{Random node drop}} &$k$-core &0.801 &0.799 &0.802 &0.800 &0.800 &0.800 &0.799 &0.800 \\
            &$k$-truss &0.803 &0.801 &0.800 &0.800 &0.800 &0.801 &0.798 &0.800 \\
            \midrule
            
            \multirow{2}{*}{\textit{CTAug node drop}} &$k$-core
            &0.837 &0.838 &0.840 &0.825 &0.833 &0.837 &0.835 &0.835 \\
            &$k$-truss &0.836 &0.838 &0.839 &0.825 &0.833 &0.832 &0.827 &0.833 \\
            
            \bottomrule
			\end{tabular}
        \end{adjustbox}
	\end{center}
\end{table*}

\textbf{Effectiveness of Cohesion Properties.}
To verify the connection between cohesion properties and graph labels, we convert cohesive subgraphs into graph features and train the same SVM classifier as our graph classification evaluation experiments. To be specific, the $i$-th $k$-core feature of a graph $\mathcal G$ is the number of nodes in its $i$-core subgraph, and the $i$-th $k$-truss feature is the number of nodes in its $i$-truss subgraph.

Table~\ref{tbl:cohesive_acc} presents the classification results of the above feature construction method, considering cohesive subgraphs. It is evident that the inclusion of cohesive features leads to a substantial enhancement in classification accuracy compared to random selection, particularly in high-degree graphs like \textit{COLLAB}, where accuracy more than doubles. Consequently, we can deduce that cohesive properties exhibit a strong correlation with graph labels, so incorporating these properties into our graph contrastive learning process provides valuable priors.

\textbf{Effectiveness of Module 1 (Topology Augmentation Enhancement).}
Table~\ref{tbl:cohesive_aug} demonstrates that the node drop augmentation of our \textit{CTAug} method effectively preserves more nodes in cohesive subgraphs and retains cohesion property in the augmented graphs, compared with random node drop augmentation (used in \textit{GraphCL}). This aligns with the design goals of \textit{CTAug}'s Module 1.

\textbf{Effectiveness of Module 2 (Graph Learning Enhancement).}
The ablation study in Table~\ref{tbl:ablation} (Sec.~\ref{sec:ablation}) shows that the removal of Module 2 leads to a significant decrease in classification accuracy. This observation validates that Module 2 effectively empowers the GNN encoder to incorporate more cohesion information into the graph embedding. 

\textbf{Conclusion.}
Our experimental findings confirm: (1) there is a strong correlation between cohesion properties and downstream tasks; (2) Module 1 of \textit{CTAug} succeeds in producing cohesion-preserving augmented graphs; (3) Module 2 enhances the capture of cohesion properties during representation learning. Therefore, \textit{CTAug} effectively captures cohesion information of the original graph and is poised to improve performance in downstream tasks.

\section{$k_{\max}$ Distribution for Datasets (Sec.~\ref{sub:dataset})}
\label{app:dataset}

\begin{figure*}[htb]
\centering 
    \subfigure[IMDB-B]{ 
        \label{fig:kmax_imdb_b}
        \includegraphics[width=0.3\linewidth]{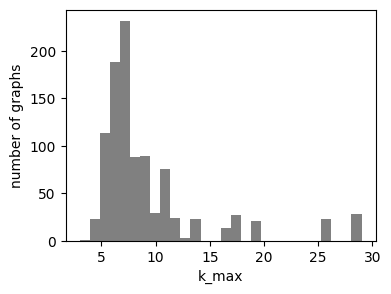} 
        } 
    \subfigure[IMDB-M]{ 
        \label{fig:kmax_imdb_m} 
        \includegraphics[width=0.3\linewidth]{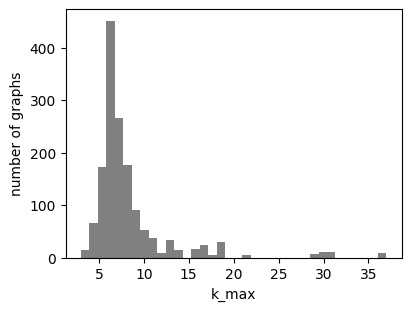} 
        } 
    \subfigure[COLLAB]{ 
        \label{fig:kmax_collab} 
        \includegraphics[width=0.3\linewidth]{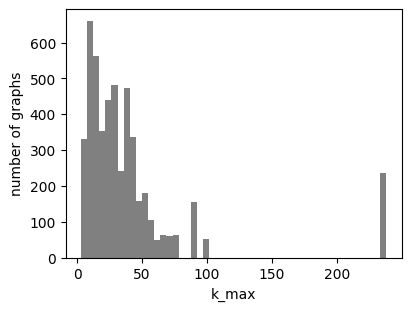} 
        } 
    \\
    \subfigure[RDT-B]{ 
        \label{fig:kmax_rdt_b}
        \includegraphics[width=0.22\linewidth]{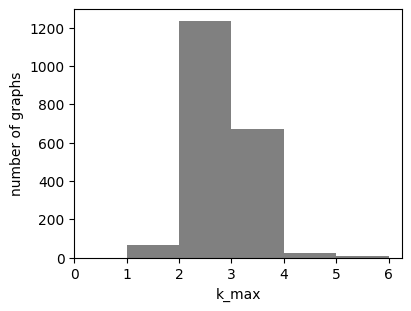} 
        } 
    \subfigure[RDT-T]{ 
        \label{fig:kmax_rdt_t} 
        \includegraphics[width=0.22\linewidth]{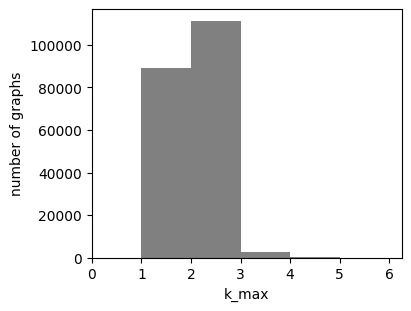} 
        } 
    \subfigure[ENZYMES]{ 
        \label{fig:kmax_enzymes} 
        \includegraphics[width=0.22\linewidth]{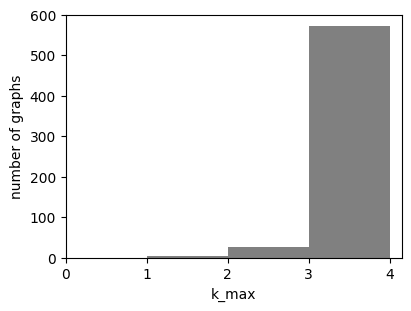} 
        } 
    \subfigure[PROTEINS]{ 
        \label{fig:kmax_proteins} 
        \includegraphics[width=0.22\linewidth]{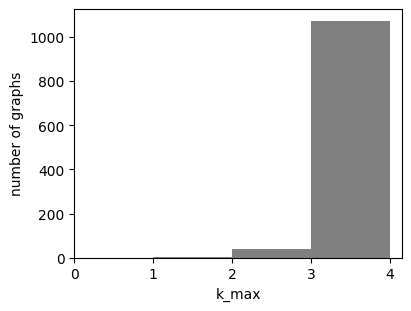} 
        } 
\caption{Histogram of $k_{\max}$ ($k$-core).}
\label{fig:kmax}
\end{figure*} 

Fig.~\ref{fig:kmax} depicts the distribution of $k_{\max}$, the maximum $k$-core index, for different datasets. 
We can observe that the \textit{IMDB-B/M} and \textit{COLLAB} datasets have higher degrees and $k_{\max}$ values, indicating the presence of more highly cohesive subgraphs. Therefore, we anticipate that \textit{CTAug} may achieve better performance on these three datasets.

\section{Baseline Methods Details (Sec.~\ref{sub:exp_methods} and Sec.~\ref{sub:exp_ext})}
\label{app:baseline}

For graph classification tasks, we choose 9 GCL methods for graph-level representation learning as our baselines.

\begin{itemize}[leftmargin=15pt, topsep=0pt, partopsep=0pt]
	\item \textbf{InfoGraph} \cite{sun2020infograph} maximizes the mutual information between graph-level representations and different scales' sub-structure-level representations to learn graph embedding without graph augmentations. We run \textit{InfoGraph} with the \textit{PyGCL} library.\footnote{\href{https://github.com/PyGCL/PyGCL}{https://github.com/PyGCL/PyGCL}}
	\item \textbf{MVGRL} \cite{hassani2020contrastive}  uses personalized PageRank on the original graph to generate a diffusion matrix as the augmented view for GCL.\footnote{\href{https://github.com/kavehhassani/mvgrl}{https://github.com/kavehhassani/mvgrl}}
	\item \textbf{GraphCL} \cite{you2020graph} designs four types of graph augmentations (random node dropping/edge perturbation/attribute masking/random walk-based subgraph sampling) used for GCL. 
    We run \textit{GraphCL} with the \textit{PyGCL} library.
	\item \textbf{JOAO} \cite{you2021graph} extends \textit{GraphCL} by adaptively choosing the augmentation operation. We re-implement \textit{JOAO} based on \textit{PyGCL} for experimentation.\footnote{We also tried the code released by \cite{you2021graph} in \href{https://github.com/Shen-Lab/GraphCL\_Automated}{https://github.com/Shen-Lab/GraphCL\_Automated}, but the results are worse than our re-implementation. So we report the results with our re-implementation.}
	\item \textbf{AD-GCL} \cite{Suresh2021AdversarialGA} optimizes graph augmentations in an adversarial way to give encoder the minimal sufficient information.\footnote{\href{https://github.com/susheels/adgcl}{https://github.com/susheels/adgcl}}
	\item \textbf{AutoGCL} \cite{yin2022autogcl} builds learnable generative node-wise augmentation policies for graph contrastive learning in an end-to-end manner.\footnote{\href{https://github.com/Somedaywilldo/AutoGCL}{https://github.com/Somedaywilldo/AutoGCL}}
	\item \textbf{RGCL} \cite{li2022let} automatically discovers rationales as graph augmentations.\footnote{\href{https://github.com/lsh0520/RGCL}{https://github.com/lsh0520/RGCL}}
	\item \textbf{SimGRACE} \cite{xia2022simgrace} takes original graph as input and employs a GNN model with its perturbed version as dual encoders to obtain two correlated views for contrast.\footnote{\href{https://github.com/junxia97/SimGRACE}{https://github.com/junxia97/SimGRACE}}
        \item \textbf{GCL-SPAN} \cite{lin2023spectral} harnesses graph spectrum to capture structural properties and guide topology augmentations. \footnote{\href{https://github.com/Louise-LuLin/GCL-SPAN}{https://github.com/Louise-LuLin/GCL-SPAN}}
\end{itemize}

We use 8 representative baseline methods that learn node embedding in an unsupervised manner. Node features are considered in all the baselines.

\begin{itemize}[leftmargin=15pt, topsep=0pt, partopsep=0pt]
    \item \textbf{DeepWalk} \cite{Perozzi2014deepwalk} uses local information obtained from random walks to learn latent representations without supervision. Note that the original \textit{DeepWalk} does not consider node features. To make a fair comparison, we concatenate a node's \textit{DeepWalk}-learned embedding and raw features together as a node's representation. 
    \item \textbf{GAE, VGAE} \cite{kipf2016variational} uses latent variables to learn representations for graphs based on variational auto-encoders.
    \item \textbf{DGI} \cite{Velickovic2019DeepGI} maximizes the mutual information between patch representations and high-level summaries of graphs.
    \item \textbf{GMI} \cite{peng2020graph} maintains the consistency of information between the input and output of a graph neural encoder.
    \item \textbf{MVGRL} \cite{hassani2020contrastive} contrasts encodings from first-order neighbors and a graph diffusion.
    \item \textbf{GRACE} \cite{zhu2020deep} generates two graph views by corruption and learn node representations by maximizing the similarity between these two views' node representations.
    \item \textbf{GCA} \cite{zhu2021graph} augments the original graph adaptively by incorporating centrality priors (\textit{e.g.}, degree centrality).
\end{itemize}

Table~\ref{tbl:global_global_gcl} shows the comparison of augmentation operations and probabilities of existing probabilistic GCL topology augmentation methods.

\begin{table}[htbp]
	\small
	\caption{Probabilistic topology augmentation behaviors of existing GCL methods.}
 \vspace{-1em}
	\label{tbl:global_global_gcl}
	\begin{center}
        \begin{adjustbox}{max width=\linewidth}
		\begin{tabular}{lcc}
			\toprule
			\bf Method  &\bf Aug. Operation &\bf Aug. Probability \\ \midrule
			\textit{GraphCL} \cite{you2020graph}   &randomly selected  &uniform   \\
			\textit{JOAO} \cite{you2021graph}  &min-max optimized  &uniform   \\
                \textit{GRACE} \cite{zhu2020deep}  &edge dropping &uniform \\
			\textit{AD-GCL} \citep{Suresh2021AdversarialGA}   &edge dropping  &adversarial learning  \\
 			\textit{AutoGCL} \cite{yin2022autogcl}  &node dropping  &generative learning  \\
			\textit{RGCL} \cite{li2022let}   &node dropping  &rationale-based learning  \\
                \textit{GCA} \cite{zhu2021graph} &edge dropping &centrality prior\\
                \textit{GCL-SPAN} \cite{lin2023spectral} &edge dropping &spectral prior\\
			 \bottomrule
		\end{tabular}
        \end{adjustbox}
	\end{center}
\end{table}

\section{Implementation Details for \textit{CTAug} variants (Sec.~\ref{sub:exp_methods} and Sec.~\ref{sec:ablation})}
\label{app:implementation}

Here, we clarify the implementation details of our methods in experiments. 
We implement our method with Python 3.8 and PyTorch 1.12.0. 

(1) \textbf{CTAug-GraphCL.} 
\textit{GraphCL} \cite{you2020graph} randomly selects an operation from $\mathcal T$=\{\textit{node dropping}, \textit{edge dropping}, \textit{edge adding}, \textit{random walk-based sampling}\} and then performs the probabilistic augmentation in a uniform manner (\textit{e.g.}, every node has the same probability of being removed).
We fix the augmentation operation to node dropping (with the default probability of 0.2), as node dropping proves to be generally well across different datasets \cite{you2020graph,li2022let}. 
The default GNN encoder of \textit{GraphCL}, \textit{i.e.}, GIN \cite{xu2018powerful}, is then enhanced by O-GSN to consider cohesive-substructure features. 
We implement this method mainly based on PyGCL. The hidden dim is 128, and the batch size is chosen from \{16, 64\} according to the size of the graphs.

Table~\ref{tbl:f_function} presents the graph classification performance for different $f$ functions in Eq.~\ref{eq:decay_general}. Overall, there are no substantial differences between different functions. We select $f(x)=x^2$ in our implementation.
We set the dropping probability decay factor $\epsilon$ through grid search for each dataset. Table~\ref{tbl:parameter_value} shows the grid search results.

(2) \textbf{CTAug-JOAO.} In \textit{CTAug-JOAO}, both node and edge-dropping operations are kept. The usage of the node or edge-dropping is determined by the optimization algorithm in \textit{JOAO} \cite{you2021graph}. Other parameter settings are the same as \textbf{CTAug-GraphCL}.

(3) \textbf{CTAug-MVGRL.} We implement this framework mainly based on MVGRL. We use GCN as the encoder, and the number of hidden units is 128. The batch size is 64. The factor $\eta$ controlling the degree to consider cohesive subgraphs in Eq.~\ref{eq:norm_vertex_weight} is also set with grid search on the specific dataset, and the grid search results are shown on Table~\ref{tbl:parameter_value}.

(4) \textbf{CTAug-GRACE \& GTAug-GCA.}
We implement this framework based on GRACE and GCA. We select degree centrality as the centrality measure, and the parameter settings are the same as in the original GRACE \citep{zhu2020deep} and GCA \citep{zhu2021graph} papers. 
The dropping probability decay factor $\epsilon$ is fixed at 1. The function $f$ in Eq.~\ref{eq:decay_general} is instantiated as $f(x)=x$.


It should be noted that for computational resource and performance reasons, we only employ Module~1 for the \textit{CTAug-GraphCL} and \textit{CTAug-JOAO} methods on the \textit{RDT-B} dataset, as well as for the \textit{CTAug-MVGRL} method on the \textit{COLLAB}, \textit{ENZYMES}, and \textit{PROTEINS} datasets.

\begin{table}[htbp]
\small
\caption{Accuracy(\%) on graph classification for different
$f$ functions.}
\vspace{-1em}
\label{tbl:f_function}
\begin{center}
    \begin{tabular}{lccc}
        \toprule 
        &\bf IMDB-B &\bf IMDB-M  &\bf AVG. \\
        \midrule
        \textit{GraphCL} &71.48±0.44  &48.11±0.60  &59.80 \\
        \midrule
         + \textit{CTAug} $\left (f(x)=x\right )$  &\textbf{76.85±1.60}  &50.98±0.62  &\textbf{63.92} \\
         + \textit{CTAug} $\left (f(x)=\sqrt{x} \right )$  &76.12±1.10  &\textbf{51.42±0.75}  &63.77  \\
         + \textit{CTAug} $\left (f(x)=x^2 \right )$  &76.60±1.02 &51.12±0.57 &63.86 \\
        \bottomrule
    \end{tabular}
\end{center}
\end{table}

\begin{table*}[htbp]
\small
\caption{Specific factor values obtained by grid search.}
\vspace{-1em}
\label{tbl:parameter_value}
\begin{center}
\begin{adjustbox}{max width=.95\linewidth}
    \begin{tabular}{lccccccc}
        \toprule
            \bf Parameter  &\bf IMDB-B  &\bf IMDB-M  &\bf COLLAB  &\bf RDT-B &\bf RDT-T &\bf ENZYMES  &\bf PROTEINS  \\
        \midrule
            $\epsilon$ (\textit{CTAug-GraphCL})  &0.2  &0.4  &0.2  &0.4 &0.2  &0.4  &0.8  \\
            $\epsilon$ (\textit{CTAug-JOAO})  &0.2  &0.2  &0.2  &0.2 &0.2 &0.2  &1.0  \\
            $\eta$ (\textit{CTAug-MVGRL})  &0.4  &0.4  &0.2  &/ &/  &0.6  &0.8  \\
        \bottomrule
    \end{tabular}
\end{adjustbox}
\end{center}
\end{table*}

\begin{table*}[htbp]
\small
\caption{Average pre-computation time (seconds per graph).}
\label{tbl:precomputation_time}
\vspace{-1em}
\begin{center}
\begin{adjustbox}{max width=.95\linewidth}
    \begin{tabular}{lccccccc}
        \toprule
        \bf Precomputation &\bf IMDB-B &\bf IMDB-M &\bf COLLAB &\bf RDT-B &\bf RDT-T  &\bf ENZYMES  &\bf PROTEINS\\ 
        \midrule
        O-GSN features  &5.357 &3.868 &4.355 &0.005 &0.001 &0.002 &0.003 \\
        $k$-core subgraphs &0.001 &0.001 &0.015 &0.007 &0.000 &0.001 &0.001 \\
        $k$-truss subgraphs &0.001 &0.001 &0.081 &0.009 &0.000 &0.001 &0.001 \\
        \bottomrule
    \end{tabular}
\end{adjustbox}
\end{center}
\end{table*}

\begin{table*}[htbp]
\small
\caption{Dataset statistics for node classification task.}
\label{tbl:datasets_statistics_node}
\vspace{-1em}
\begin{center}
\begin{adjustbox}{max width=.95\linewidth}
    \begin{tabular}{lccccccc}
        \toprule
        \bf Dataset  &\bf \#Nodes &\bf \#Edges &\bf \#Features &\bf \#Classes  &\bf Avg. Degree  &\bf $k_{\max}$ ($k$-core)  &\bf $k_{\max}$ ($k$-truss) \\ \midrule
        \textit{Coauthor-CS}  &18,333  &81,894 &6,805 &15  &8.93  &19  &20 \\
            \textit{Coauthor-Physics} &34,493 &247,962 &8,415 &5  &14.38  &18  &12 \\
            \textit{Amazon-Computers} &13,752 &245,861 &767 &10  &35.76  &53  &33 \\
         \bottomrule
    \end{tabular}
\end{adjustbox}
\end{center}
\end{table*}

\section{Graph Classification Precision \& Recall Results (Sec.~\ref{sub:main_results})}
\label{app:precision_recall}

Table~\ref{tbl:graph_classification_precision} and~\ref{tbl:graph_classification_recall} present the precision and recall outcomes for the graph classification task. The average way of precision/recall metric is 'macro', implying we firstly calculate metrics for each class and then take the average. 
Notably, our \textit{CTAug} methods exhibit significant performance enhancements on \textit{IMDB-B}, \textit{IMDB-M}, and \textit{ENZYMES}.

\begin{table}[htbp]
    \small
    \caption{Precision (\%) on graph classification. }
    \label{tbl:graph_classification_precision}
    \vspace{-1em}
    \begin{center}
        \begin{tabular}{lccc}
            \toprule
            \bf Method &\bf IMDB-B &\bf IMDB-M &\bf ENZYMES \\  
            \midrule
            \textit{GraphCL}  &73.09±0.64 &49.92±0.78 &31.82±2.97 \\
            \textit{CTAug-GraphCL}  &77.19±1.47  &\textbf{50.93±1.04}  &37.83±2.19  \\
            \midrule
            \textit{JOAO}  &72.14±0.27  &49.94±0.79  &34.49±1.37\\
            \textit{CTAug-JOAO} &\textbf{77.66±1.04} &50.62±0.60  &\textbf{39.63±2.24} \\
            \bottomrule
        \end{tabular}
    \end{center}
    \vspace{-1em}
\end{table}

\begin{table}[htbp]
    \small
    \caption{Recall (\%) on graph classification. }
    \label{tbl:graph_classification_recall}
    \vspace{-1em}
    \begin{center}
        \begin{tabular}{lccc}
            \toprule
            \bf Method &\bf IMDB-B &\bf IMDB-M &\bf ENZYMES \\  
            \midrule
            \textit{GraphCL} &66.96±1.18 &47.85±0.89 &31.63±2.45 \\
            \textit{CTAug-GraphCL} &76.32±1.59 &\textbf{50.79±0.71} &36.77±2.50  \\
            \midrule
            \textit{JOAO} &71.46±0.26 &48.15±0.72 &34.00±2.02 \\
            \textit{CTAug-JOAO}  &\textbf{76.60±0.91} &50.55±0.50 &\textbf{39.23±1.79} \\
            \bottomrule
        \end{tabular}
    \end{center}
\end{table}

\section{Pre-computation Time (Sec.~\ref{sub:main_results})}
\label{app:precomputation_time}

Table~\ref{tbl:precomputation_time} showcases the mean pre-computation duration for a single graph. The computation of O-GSN features for high-degree graphs necessitates up to a few seconds, whereas low-degree graphs merely demand approximately $10^{-3}$ seconds. The computation of $k$-core and $k$-truss subgraphs typically consumes only between $10^{-3}$ to $10^{-2}$ seconds. We adopt 3-clique as substructure to calculate O-GSN features for \textit{COLLAB}, and we select 3,4,5-clique for other datasets.

\section{Parameter Analysis on $\epsilon$ (Sec.~\ref{sec:ablation})}
\label{app:parameter}

Table~\ref{tbl:parameter_epsilon} delineates the performance of \textit{CTAug-GraphCL/JOAO} as decay factor $\epsilon$ oscillates. 
We perceive that a majority of $\epsilon$ configurations can increase accuracy in contrast to the original \textit{GraphCL/JOAO}. The premium selection for $\epsilon$ typically settles between 0.2 and 0.4, facilitating a balanced compromise between the diversity of augmented graphs (peak diversity at $\epsilon=0$) and the maintenance of cohesion properties (optimal preservation at $\epsilon=1$).

\begin{table}[htbp]
\small
	\caption{Parameter analysis of $\epsilon$.}
	\label{tbl:parameter_epsilon}
    \vspace{-1em}
	\begin{center}
        \begin{adjustbox}{max width=\linewidth}
		\begin{tabular}{lcccc}
                \toprule 
                \bf Method &\bf $\epsilon$ &\bf IMDB-B &\bf IMDB-M &\bf RDT-B \\ 
                \midrule
                 \textit{GraphCL}  &/  &71.48±0.44  &48.11±0.60  &91.69±0.70  \\
                \midrule
                \multirow{5}{*}{\textit{CTAug-GraphCL}} &0.2  &75.98±0.78  &50.84±0.83  &91.60±0.27  \\
                &0.4  &\textbf{76.60±1.02}  &\textbf{51.12±0.57}  &91.88±0.32  \\
                &0.6  &75.84±1.24  &50.83±0.60  &91.85±0.26  \\
                &0.8  &75.68±0.70  &50.16±0.24  &91.94±0.41  \\
                &1.0  &74.90±0.51  &49.67±0.62  &\textbf{92.28±0.33}  \\
                \midrule
                \textit{JOAO}  &/  &71.40±0.38  &48.68±0.36  &91.66±0.59 \\
                \midrule
                \multirow{5}{*}{\textit{CTAug-JOAO}} &0.2  &\textbf{76.80±0.71}  &\textbf{51.19±0.88}  &\textbf{92.19±0.24}  \\
                &0.4  &76.36±1.42  &50.48±0.83  &92.17±0.30  \\
                &0.6  &76.56±0.49  &50.40±0.88  &91.52±0.54  \\
                &0.8  &75.10±1.43  &50.53±0.89  &91.92±0.39  \\
                &1.0  &75.18±1.29  &50.17±0.75  &92.01±0.42  \\
			\bottomrule
			\end{tabular}
        \end{adjustbox}
	\end{center}
\end{table}

\section{Datasets for Node Classification (Sec.~\ref{sub:exp_ext})}
\label{app:node_classification}

For \textit{node classification}, we conduct experiments on \textit{Coauthor-CS}, 
\textit{Coauthor-Physics} and \textit{Amazon-Computers} (Table~\ref{tbl:datasets_statistics_node}).

\begin{itemize}[leftmargin=15pt, topsep=0pt, partopsep=0pt]
    \item \textbf{Coauthor-CS} \& \textbf{Coauthor-Physics} \citep{shchur2018pitfalls} are two co-authorship graphs based on the Microsoft Academic Graph from the KDD Cup 2016 challenge. In these graphs,
    nodes are authors; node features represent paper keywords for each author’s papers; edges reveal co-authorship relationships; class labels indicate their most active research field.
    
    \item \textbf{Amazon-Computers} \citep{shchur2018pitfalls} is a co-purchase relationship network built based on Amazon, where nodes represent goods, and two goods are connected if customers frequently buy them together. 
    Each node has a bag-of-words feature (encoding product reviews), and class labels indicate the product category.

\end{itemize}

\end{document}